\documentclass[11pt]{article}
\usepackage[preprint]{acl}
\usepackage{times}
\usepackage{latexsym}
\usepackage[T1]{fontenc}
\usepackage[utf8]{inputenc}
\usepackage{microtype}
\usepackage{comment}
\usepackage{amsmath,amssymb,amsthm}
\usepackage{booktabs,tabularx,graphicx,xcolor}
\usepackage{tikz}
\usepackage{float}
\usepackage{algorithm}
\usepackage{algpseudocode}
\usetikzlibrary{arrows.meta,positioning,patterns}
\newtheorem{proposition}{Proposition}
\newtheorem{theorem}{Theorem}
\usepackage{xurl}
\usepackage[table]{xcolor}
\definecolor{highrisk}{rgb}{0.7,0,0}
\definecolor{ndcolor}{rgb}{0,0.3,0.6}
\definecolor{nfcolor}{rgb}{0.4,0,0.5}
\definecolor{udcolor}{rgb}{0,0.4,0.4}
\newcommand{\hr}[1]{\textcolor{highrisk}{#1}}
\newcommand{\nd}[1]{\textcolor{ndcolor}{#1}}
\newcommand{\nf}[1]{\textcolor{nfcolor}{#1}}
\newcommand{\ud}[1]{\textcolor{udcolor}{#1}}

\newcommand{\E}{\mathbb{E}}
\newcommand{\Prob}{\mathbb{P}}
\newcommand{\ind}{\mathbf{1}}

\title{Calibration and Risk-Budgeted Decisions with System One Models}

\author{Soundouss Messoudi \\ 
  Heudiasyc - UMR CNRS 7253 \\
  Université de Technologie de Compiègne \\
  \texttt{soundouss.messoudi@hds.utc.fr} \\\And
  Youness Mansar \\
  Independent Researcher \\
  France \\
   \\
   \\}

\begin{document}
\maketitle
\begin{abstract}
LLM workflows and agents rely on a set of binary decisions like allow/block an input, or discard/retain a retrieved document. When human deferral is unavailable, uncertain cases must still receive an action. We study System One decision scores in this setting by defaulting to a conservative fallback when uncertain, and controlling the false omission risk within the subset of inputs assigned the utility-maximizing action. Across six datasets covering moderation, safety, and reranking, we compare general System One models with task baselines and a local open-weights fine-tuned System One model. We show that naive baselines for threshold selection exceed the target risk in roughly half of settings with defined test risk, compared with 7.8\% for Learn Then Test (LTT), which leverages finite-sample statistical guarantees. Our results show why relying on a probability-like interface is insufficient for deployment decisions and how risk-budgeted selection trades useful action against observed error.
\end{abstract}

\section{Introduction}

While autoregressive Large Language Models (LLMs) excel at generation and reasoning, their token-by-token generation paradigm induces latency bottlenecks. This makes them ill-suited for fast, structured decisions in settings under tight latency and cost budgets, such as model routing \citep{ong2025routellm}, content moderation \citep{yang2023towards}, prompt-injection detection \citep{guo2026berm}, and safety gating \citep{han2024wildguard}. This mismatch motivates lightweight decision models that expose actionable scores for downstream use.

\begin{figure}[!t]
\centering
\begin{tikzpicture}[x=1cm,y=1cm,font=\footnotesize]
\definecolor{actionzero}{RGB}{0,114,178}
\definecolor{actionone}{RGB}{230,159,0}
\definecolor{defercolor}{RGB}{130,130,130}
\node[anchor=west,font=\bfseries\footnotesize] at (0,4.95) {(a) Selective policy: a middle abstention region};
\begin{scope}[yshift=3.45cm]
  \begin{scope}
    \clip (0,0) rectangle (2.2,1.12);
    \fill[actionzero!55] (0,0) -- plot[domain=0:6.8,samples=120] (\x,{0.10+0.78*exp(-\x/0.55)+1.0*exp(-(6.8-\x)/0.42)}) -- (6.8,0) -- cycle;
  \end{scope}
  \begin{scope}
    \clip (2.2,0) rectangle (4.6,1.12);
    \fill[defercolor!12] (0,0) -- plot[domain=0:6.8,samples=120] (\x,{0.10+0.78*exp(-\x/0.55)+1.0*exp(-(6.8-\x)/0.42)}) -- (6.8,0) -- cycle;
    \fill[pattern=dots,pattern color=black!65] (0,0) -- plot[domain=0:6.8,samples=120] (\x,{0.10+0.78*exp(-\x/0.55)+1.0*exp(-(6.8-\x)/0.42)}) -- (6.8,0) -- cycle;
  \end{scope}
  \begin{scope}
    \clip (4.6,0) rectangle (6.8,1.12);
    \fill[actionone!65] (0,0) -- plot[domain=0:6.8,samples=120] (\x,{0.10+0.78*exp(-\x/0.55)+1.0*exp(-(6.8-\x)/0.42)}) -- (6.8,0) -- cycle;
    \fill[pattern=north east lines,pattern color=black!45] (0,0) -- plot[domain=0:6.8,samples=120] (\x,{0.10+0.78*exp(-\x/0.55)+1.0*exp(-(6.8-\x)/0.42)}) -- (6.8,0) -- cycle;
  \end{scope}
  \draw[gray!80,thick] plot[domain=0:6.8,samples=120] (\x,{0.10+0.78*exp(-\x/0.55)+1.0*exp(-(6.8-\x)/0.42)});
  \draw[gray] (0,0)--(6.8,0);
  \draw[thin] (2.2,0)--(2.2,.92) node[above] {$\tau_{\mathrm{low}}$};
  \draw[thin] (4.6,0)--(4.6,.92) node[above] {$\tau_{\mathrm{high}}$};
  \fill[actionzero!22] (0,-.4) rectangle (2.2,-.08);
  \fill[defercolor!25] (2.2,-.4) rectangle (4.6,-.08);
  \fill[actionone!30] (4.6,-.4) rectangle (6.8,-.08);
  \node at (1.1,-.24) {Action 0};
  \node at (3.4,-.24) {Abstain / defer};
  \node at (5.7,-.24) {Action 1};
  \foreach \x/\lab in {0/0,3.4/0.5,6.8/1} {\node[below] at (\x,-.42) {$\lab$};}
\end{scope}
\node[anchor=west,font=\bfseries\footnotesize] at (0,2.25) {(b) Our binary policy: no abstention action};
\begin{scope}[yshift=.65cm]
  \begin{scope}
    \clip (0,0) rectangle (2.2,1.12);
    \fill[actionzero!55] (0,0) -- plot[domain=0:6.8,samples=120] (\x,{0.10+0.78*exp(-\x/0.55)+1.0*exp(-(6.8-\x)/0.42)}) -- (6.8,0) -- cycle;
  \end{scope}
  \begin{scope}
    \clip (2.2,0) rectangle (6.8,1.12);
    \fill[actionone!65] (0,0) -- plot[domain=0:6.8,samples=120] (\x,{0.10+0.78*exp(-\x/0.55)+1.0*exp(-(6.8-\x)/0.42)}) -- (6.8,0) -- cycle;
    \fill[pattern=north east lines,pattern color=black!45] (0,0) -- plot[domain=0:6.8,samples=120] (\x,{0.10+0.78*exp(-\x/0.55)+1.0*exp(-(6.8-\x)/0.42)}) -- (6.8,0) -- cycle;
  \end{scope}
  \draw[gray!80,thick] plot[domain=0:6.8,samples=120] (\x,{0.10+0.78*exp(-\x/0.55)+1.0*exp(-(6.8-\x)/0.42)});
  \draw[gray] (0,0)--(6.8,0);
  \draw[thin] (2.2,0)--(2.2,.92) node[above] {$\widehat\tau_{\mathrm{LTT}}$};
  \fill[actionzero!22] (0,-.53) rectangle (2.2,-.08);
  \fill[actionone!30] (2.2,-.53) rectangle (6.8,-.08);
  \node[align=center] at (1.1,-.30) {Allow / discard};
  \node[align=center] at (4.5,-.30) {Block / retain (fallback)};
  \foreach \x/\lab in {0/0,3.4/0.5,6.8/1} {\node[below] at (\x,-.54) {$\lab$};}
\end{scope}
\node at (3.4,-.45) {Positive-event score $s(x)$};
\end{tikzpicture}
\caption{Binary actions' illustration. Under calibrated probabilities, scores near $0.5$ indicate uncertain labels: a selective policy can abstain. Our policy instead assigns every input an action via an LTT-selected threshold.}
\label{fig:motivation}
\end{figure}

\emph{System One models}, first introduced with Jev \citep{jev2026}, address this by mapping inputs to bounded decision scores rather than generating unrestricted text. Crucially, these models claim to output well-calibrated probabilities \citep{jev2026}. This suggests an intuitive policy: under exact calibration, thresholding at $\alpha$ guarantees a False Omission Rate (the risk among inputs scored below $\alpha$) bounded by $\alpha$. However, optimizing System One models for calibration during training cannot guarantee test-time calibration across varied downstream distributions \citep{guo2017calibration}. 

The deployment challenge is therefore twofold: determining which operational action the system should take when a model's prediction is uncertain, and guaranteeing safety when its scores are uncalibrated. Moreover, these decisions have severe asymmetric effects that change what uncertainty should trigger (Figure~\ref{fig:motivation}). Selective classification typically mitigates risk through abstention, deferring difficult examples to a human or another model \citep{geifman2017selective}. We focus on fully automated environments where abstention is not a viable action. In these forced-action settings, the conservative fallback is to block an input (for safety) or retain a document (for filtering).

Always defaulting to this fallback avoids the targeted false-negative error but degrades system utility through severe over-refusal or context bloat. The practical objective is therefore to depart from the fallback action as often as possible, while controlling the errors within that newly selected subset. We express this constraint as the False Omission Rate (FOR), $\Prob(Y=1\mid\widehat Y=0)$, where $\widehat Y=0$ denotes the utility-maximizing choice (allow an input or discard a document). We apply Learn Then Test (LTT) \citep{angelopoulos2025learn} to statistically certify threshold regions that satisfy the FOR constraint, effectively substituting selective abstention with the conservative fallback.

We study three linked questions: (Q1) Do the supplied scores reliably represent positive-event probabilities and sufficiently separate the classes? (Q2) How do Naive, Empirical, and LTT threshold selection policies trade observed excess risk against utility? (Q3) How do these outcomes vary across System One models, specialist baselines, and a local open-weights fine-tuned System One model? 

Evaluating on six datasets, we show that safely deploying automated decisions under asymmetric risk needs task-specific validation and rigorous risk-budgeting, rather than relying on the assumed universal calibration of any System One model.

\section{Related Work}
\paragraph{Selective Prediction and Risk Control.}
Selective classification trades coverage for accuracy \citep{geifman2017selective}. An abstention option delegates uncertain inputs. Instead, our setting fixes two operational actions and measures error within the negative-action subset. Risk-controlling prediction sets and Learn Then Test (LTT) \citep{bates2021distribution,angelopoulos2025learn} use held-out data to support statistical risk. LTT is not restricted to abstention; we apply its conditional-risk formulation to binary action policies.

\paragraph{Probability Calibration.}
Because neural network calibration metrics are notoriously sensitive to binning \citep{guo2017calibration,nixon2019measuring}, proper scores like Brier are essential for broader probability assessment \citep{ferrer2025evaluating}. In LLMs, calibration has been examined for question answering \citep{jiang2021can}, self-evaluation \citep{kadavath2022language}, and elicited confidence \citep{tian2023just,xiong2024can}. These studies motivate examining how scores are obtained and what their probabilities actually mean.

\paragraph{Typed-decision evaluation.}
Datasets such as WildGuard \citep{han2024wildguard} and ATBench \citep{li2026atbench} provide foundations for evaluating moderation and trajectory safety. Recent studies of typed-decision models explore aggregate calibration and error ranking \citep{hu2026safety}, as well as probability coherence across logically related questions \citep{li2026beyond}. However, they do not certify a selected action policy's error rate. We distinguish our work by connecting System One scores to risk control across moderation, safety, and reranking.

\section{Problem Formulation}
\label{sec:background}

\subsection{Actions and Asymmetric Risk}
Let $(X,Y)\sim P$ denote an input and its binary ground-truth label drawn from the evaluation population $P$, with $Y\in\{0,1\}$. In our high-stakes language-model workflow, we have:
\begin{itemize}
    \item $Y=1$ represents the \textbf{high-stakes target} (e.g., an unsafe prompt injection attempt, a harmful message, or a highly relevant document).
    \item $Y=0$ represents the \textbf{benign background} (e.g., a safe prompt, an allowable message, or an irrelevant document).
\end{itemize}

A frozen scorer returns $s(x)\in[0,1]$, oriented so that larger scores express stronger evidence for $Y=1$. We use threshold $\tau\in[0,1]$ to define the binary decision rule $\widehat Y=\pi_\tau(x)$:
\begin{equation*}
\pi_\tau(x) = 
\begin{cases} 
0 & \text{if } s(x) < \tau, \\
1 & \text{if } s(x) \ge \tau.
\end{cases}
\end{equation*}
Here, Action 0 is the utility-maximizing choice (allowing safe inputs or discarding irrelevant documents) taken under high confidence. Action 1 serves as the conservative fallback (blocking or retaining), triggered by either high confidence in class 1 or predictive uncertainty.

In this setting, the costs of errors are asymmetric:
\begin{itemize}
    \item \textbf{False Negative (Predict 0 / True 1):} This is a \textbf{critical failure}. It means allowing a toxic interaction to execute or permanently discarding a crucial piece of evidence.
    \item \textbf{False Positive (Predict 1 / True 0):} This is a \textbf{utility loss}. It causes over-refusal of benign inputs or over-inclusion that inflates an LLM context with irrelevant documents. It is treated as the lower-priority error in our setting.
\end{itemize}

Because predicting 0 when the true label is 1 represents the critical system failure, the primary risk $R(\tau)$ is defined on the negative-action subset of data (i.e., when the system predicts 0). This is the False Omission Rate (FOR):
\begin{equation}
\begin{split}
R(\tau) &= \Prob(Y=1\mid \widehat Y=0) \\
        &= \Prob(Y=1\mid s(X)<\tau).
\end{split}
\label{eq:risk}
\end{equation}

\subsection{The Optimization Trade-Off}\label{sec:tradeoff}
While the high-stakes constraint requires defaulting to Action 1 when uncertain, always predicting 1 gives a practically useless system with an empty negative-action subset, where its measure is $\mu(\tau) = \Prob(s(X)<\tau) = 0$. 

The system's primary optimization goal is to maximize the size of this negative-action subset subject to a strict population-level risk budget $\alpha \in (0,1)$. Ideally, we seek a threshold $\tau$ that solves:
\begin{equation*}
\begin{aligned}
\max_{\tau} \quad & \mu(\tau) = \Prob(s(X)<\tau)\\
\text{subject to} \quad & R(\tau) \le \alpha
\end{aligned}
\end{equation*}

Maximizing this set is linked to minimizing the number of false positives, $\sum_{i=1}^n \ind\{\widehat Y_{\tau,i} = 1, Y_i = 0\}$. As $\mu(\tau)$ grows, fewer true zeros are incorrectly forced into the fallback action (Action 1), driving down the incidence of over-refusal and context bloat.

However, because the true data distribution is unknown, we cannot directly access the population risk $R(\tau)$. The objective is therefore to establish an empirical threshold selection procedure that selects the largest certified candidate $\tau$ to minimize utility loss, while relying on statistical testing to rigorously ensure the critical risk of catastrophic FOR is controlled at budget $\alpha$.

\section{Threshold Selection and Risk Control}
\label{sec:method}

\begin{algorithm}[t]
\caption{Fixed-order threshold certification.}
\label{alg:fixed_order_certification}
\begin{algorithmic}[1]
\Require Frozen scorer; independently fixed ordered thresholds $(\tau_1,\ldots,\tau_M)$; independent $D_{\mathrm{cal}}$; $(\alpha,\delta)$
\State Initialize the certified threshold set $\widehat{\mathcal T} \leftarrow \varnothing$
\For{$j = 1, \dots, M$}
    \State Compute $(N_j, K_j, p_j)$
    \If{$p_j > \delta$}
        \State \textbf{break} \Comment{Stop the sequence}
    \Else
        \State $\widehat{\mathcal T}\leftarrow \widehat{\mathcal T}\cup\{\tau_j\}$
    \EndIf
\EndFor
\If{$\widehat{\mathcal T} = \varnothing$}
    \State \Return $\bot$ \Comment{Non-deployment (Fallback to Action 1)}
\EndIf
\State \Return $\widehat\tau \in \arg\max_{\tau_j \in \widehat{\mathcal T}} \widehat \mu_{\mathrm{cal},j}$ \Comment{Maximize negative-action subset measure}
\Ensure Selected $\widehat\tau$, or non-deployment ($\bot$)
\end{algorithmic}
\end{algorithm}

\subsection{Certifying the False Omission Rate}
\label{sec:ltt}
To solve the finite-sample optimization trade-off introduced in Section~\ref{sec:tradeoff}, we must rely on statistical testing. Given the target risk budget $\alpha$ and a statistical error level $\delta \in (0,1)$, the formal objective is to find a threshold $\tau$ that maximizes the measure of the negative-action subset while guaranteeing the risk bound holds with high probability:
\begin{equation*}
\begin{aligned}
\max_{\tau} \quad & \mu(\tau) = \Prob(s(X)<\tau)\\
\text{subject to} \quad & \Prob(R(\tau) \le \alpha) \ge 1 - \delta
\end{aligned}
\end{equation*}

To obtain this guarantee, we specialize Learn Then Test (LTT) \citep{angelopoulos2025learn} to the binary conditional risk in Equation~\ref{eq:risk}. 

Let $\mathcal T=\{\tau_1,\ldots,\tau_M\}$ contain $M$ candidate thresholds, tested in the specified order $(\tau_1,\ldots,\tau_M)$ fixed independently of the calibration\footnote{The word "calibration" in LTT refers to statistical selection, not probability recalibration.} sample. Let $\alpha \in (0,1)$ be the risk budget and $\delta\in(0,1)$ be the error level that bounds the probability of certifying any excessive-risk candidate in this family. Let $D_{\mathrm{fit}}$ denote data used to fit the scorer $s(X)$ and construct the threshold family's grid, and let $D_{\mathrm{cal}}=\{(X_i,Y_i)\}_{i=1}^{n_{\mathrm{cal}}}$ be the separate calibration sample of size $n_{\mathrm{cal}}$. Define:
\begin{align*}
    N_j &=\sum_{i=1}^{n_{\mathrm{cal}}} \ind\{s(X_i)<\tau_j\},\\
    K_j &=\sum_{i=1}^{n_{\mathrm{cal}}} Y_i\ind\{s(X_i)<\tau_j\}.
\end{align*}
For candidate $j$, $N_j$ counts calibration examples assigned Action 0, and $K_j$ counts positive labels among them. For $N_j>0$,
\[
    \widehat R_{\mathrm{cal},j}
    =\frac{K_j}{N_j},
    \qquad
    \widehat \mu_{\mathrm{cal},j}
    =\frac{N_j}{n_{\mathrm{cal}}}.
\]

For a fixed policy under independent and identically distributed (i.i.d.) calibration sampling, conditioning on $N_j=n$ gives
\[
    K_j\mid N_j=n
    \sim\operatorname{Binomial}\!\left(n,R(\tau_j)\right).
\]
To test the null hypothesis $H_j:R(\tau_j)>\alpha$ (i.e., that the policy exceeds the risk budget), we use the lower-tail binomial $p$-value:
\begin{equation*}
    p_j= \sum_{k=0}^{K_j} \binom{N_j}{k} \alpha^k(1-\alpha)^{N_j-k},
\end{equation*}
with $p_j=1$ when $N_j=0$. A small $p_j$ provides the statistical evidence needed for certification. Algorithm~\ref{alg:fixed_order_certification} states the fixed-family procedure to which the theoretical LTT guarantees apply.

\begin{theorem}[Fixed-family LTT specialization \citep{angelopoulos2025learn}]
\label{thm:ltt}
Condition on $D_{\mathrm{fit}}$, which fixes the scorer and ordered candidate family. Suppose that $D_{\mathrm{cal}}$ consists of i.i.d.\ draws from $P$, independently of $D_{\mathrm{fit}}$. Algorithm~\ref{alg:fixed_order_certification} satisfies:
\begin{equation*}
    \Prob_{D_{\mathrm{cal}}} \!\left( \exists\tau\in\widehat{\mathcal T}: R(\tau)>\alpha \right) \leq\delta.
    \label{eq:fwer}
\end{equation*}
Consequently, any rule returning an element of $\widehat{\mathcal T}$ or $\bot$ (non-deployment) satisfies
\begin{equation*}
    \Prob_{D_{\mathrm{cal}}} \!\left( \widehat\tau\neq\bot,\, R(\widehat\tau)>\alpha \right) \leq\delta.
    \label{eq:guarantee}
\end{equation*}
\end{theorem}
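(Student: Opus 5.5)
The plan is to prove this as a fixed-sequence testing argument: first show that each $p_j$ is a valid $p$-value under its null $H_j$, and then show that the ordered stopping rule of Algorithm~\ref{alg:fixed_order_certification} controls the family-wise error rate at level $\delta$ without any multiplicity correction. Throughout, we condition on $D_{\mathrm{fit}}$. This makes $s$, the grid $(\tau_1,\ldots,\tau_M)$ and its order deterministic, and the risks $R(\tau_j)$ fixed numbers. The only randomness left is $D_{\mathrm{cal}}$, which is i.i.d.\ from $P$ and independent of $D_{\mathrm{fit}}$.

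\textbf{Step 1: validity of each $p$-value.} Fix $j$ with $R(\tau_j)>\alpha$. Conditional on $N_j=n>0$, the i.i.d.\ assumption gives $K_j\mid N_j=n\sim\operatorname{Binomial}(n,R(\tau_j))$. This holds because the examples with $s(X_i)<\tau_j$ are i.i.d.\ draws from $P$ conditioned on that event. The function $p_j$ equals $F_{n,\alpha}(K_j)$, where $F_{n,\alpha}$ is the Binomial$(n,\alpha)$ CDF.

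We need $\Prob(F_{n,\alpha}(K)\le u)\le u$ when $K\sim\operatorname{Binomial}(n,r)$ with $r>\alpha$. This follows from two facts. First, $F_{n,\alpha}(K')$ is super-uniform when $K'\sim\operatorname{Binomial}(n,\alpha)$, by the standard discrete CDF argument. Second, $\operatorname{Binomial}(n,r)$ stochastically dominates $\operatorname{Binomial}(n,\alpha)$ for $r>\alpha$, via the usual coupling of uniforms. Since $F_{n,\alpha}$ is nondecreasing, the dominance makes $\{F_{n,\alpha}(K)\le u\}$ no more likely under $r$ than under $\alpha$.

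When $N_j=0$ we have $p_j=1$, so the event $p_j\le u<1$ is empty. Averaging over $N_j$ then gives $\Prob(p_j\le\delta)\le\delta$ unconditionally.

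\textbf{Step 2: fixed-sequence FWER.} Suppose some $\tau\in\widehat{\mathcal T}$ has $R(\tau)>\alpha$. Let $j^\star$ be the smallest index with $R(\tau_{j^\star})>\alpha$; this index is deterministic given $D_{\mathrm{fit}}$. Because the algorithm certifies a prefix $\tau_1,\ldots,\tau_J$ and stops at the first failure, any certified null index $j$ satisfies $j\ge j^\star$. So $\tau_{j^\star}$ itself was certified, which requires $p_{j^\star}\le\delta$.

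Hence
\[
\Prob\big(\exists\tau\in\widehat{\mathcal T}:R(\tau)>\alpha\big)\le\Prob(p_{j^\star}\le\delta)\le\delta.
\]
If no null index exists, the probability is $0$.

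\textbf{Step 3: consequence.} If $\widehat\tau\neq\bot$ and $R(\widehat\tau)>\alpha$, then $\widehat\tau\in\widehat{\mathcal T}$ is a certified null threshold. The second display is therefore contained in the event of the first, and it holds for any selection rule, including the $\arg\max$ of $\widehat\mu_{\mathrm{cal},j}$, even though that rule is data dependent.

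\textbf{Main obstacle.} The delicate point is Step 2: we must not condition jointly on data-dependent quantities. The key observation is that $j^\star$ is determined by $D_{\mathrm{fit}}$ alone, so only the marginal validity from Step 1 is needed at a single fixed index, even though the $p_j$ are dependent across $j$ (nested subsets of the same sample). The super-uniformity at the boundary, including discreteness and the composite null, is routine by comparison.
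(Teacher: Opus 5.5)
Your proposal is correct and follows the same route as the paper. The paper's proof simply invokes LTT's fixed-sequence testing result (Theorem 2.1 there), and your Steps 1 and 2 write out the two ingredients that citation supplies: super-uniformity of the conditional-binomial $p_j$ under $H_j$ (averaged over $N_j$), and the first-null argument for fixed-sequence FWER control, which correctly relies on $j^\star$ being determined by $D_{\mathrm{fit}}$ alone.
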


\begin{proof}
This theorem is a direct application of fixed-sequence testing (specifically Theorem 2.1) established by \citet{angelopoulos2025learn}, applied to our binary thresholding rule.
\end{proof}

At deployment, the selected $\widehat\tau$ assigns every input to Action 0 or 1. Returning $\bot$ means threshold selection did not establish a certificate for a useful negative-action subset, and the system must default to the fallback action (Action 1). This distinction separates \emph{failure to select a certified policy} from \emph{deferring an individual input}.

\subsection{Why Naive Thresholding Is Not Enough}
\label{sec:bridge}
Let us consider the relationship between population risk and score calibration. Let the random variable $S=s(X)$ be the score and $m(u)=\Prob(Y=1\mid S=u)=\E[Y\mid S=u]$ its calibration function. Exact calibration means $m(S)=S$ except on a probability-zero set. The following identity separates the average model score from the average probability error within the negative-action region.

\begin{proposition}[Calibration--policy identity]\label{prop:bridge}
For $\mu(\tau)>0$,
\begin{equation*}
R(\tau)=\E[S\mid S<\tau]+\E[m(S)-S\mid S<\tau].
\end{equation*}
Consequently,
\begin{equation*}
R(\tau)\leq\tau+\frac{\E[|m(S)-S|\ind\{S<\tau\}]}{\mu(\tau)}.
\end{equation*}
Under exact calibration, $R(\tau)\leq\tau$.
\end{proposition}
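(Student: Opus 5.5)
The plan is to obtain the identity from the tower property and then bound each of its two terms separately. Throughout, $\mu(\tau)=\Prob(S<\tau)>0$, so all conditional expectations given $\{S<\tau\}$ are well defined.

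\emph{Step 1 (identity).} Since $\{S<\tau\}$ is measurable with respect to $\sigma(S)$, the tower property gives
\begin{equation*}
R(\tau)=\frac{\E[Y\ind\{S<\tau\}]}{\mu(\tau)}=\frac{\E[\E[Y\mid S]\ind\{S<\tau\}]}{\mu(\tau)}=\E[m(S)\mid S<\tau].
\end{equation*}
Writing $m(S)=S+(m(S)-S)$ and using linearity of conditional expectation yields the stated decomposition. Both pieces are integrable because $S$ and $m(S)$ take values in $[0,1]$.

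\emph{Step 2 (bound).} On the event $\{S<\tau\}$ we have $S<\tau$, so $\E[S\mid S<\tau]\le\tau$. For the second term, I bound the signed error by its absolute value:
\begin{equation*}
\E[m(S)-S\mid S<\tau]\le\E[|m(S)-S|\mid S<\tau]=\frac{\E[|m(S)-S|\ind\{S<\tau\}]}{\mu(\tau)}.
\end{equation*}
Adding the two bounds gives the stated inequality.

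\emph{Step 3 (exact calibration).} Under exact calibration, $|m(S)-S|=0$ almost surely, so the numerator of the error term vanishes and $R(\tau)\le\tau$. In fact the bound is strict, since $\E[S\mid S<\tau]<\tau$ whenever $\mu(\tau)>0$, although the statement only needs the non-strict version.

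The only delicate point is measure-theoretic: $m$ must be chosen as a measurable version of $\E[Y\mid S]$, and "exact calibration" is understood as holding $P$-almost surely. Neither is a real obstacle. The rest is routine.
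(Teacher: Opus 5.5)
Your proposal is correct and follows the paper's proof essentially step for step: you use iterated expectation to get $R(\tau)=\E[m(S)\mid S<\tau]$, bound $\E[S\mid S<\tau]$ by $\tau$, and bound the signed discrepancy by its absolute value normalized by $\mu(\tau)$. Your remark that the exact-calibration bound is actually strict is also correct, since $\tau-S>0$ on an event of positive probability.
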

The derivation appears in Appendix~\ref{app:proofs}. This identity demonstrates that if a model is perfectly calibrated, naive thresholding achieves bounded risk (i.e., setting $\tau = \alpha$ gives $R(\tau) \le \alpha$). However, it also proves that when scores are uncalibrated, depending on the direction of the error, naive thresholding can allow excessive risk. LTT instead tests the population-risk constraint using finite-sample data, without requiring calibrated model scores. Its guarantee is the calibration-sampling statement in Theorem~\ref{thm:ltt}, not a promise that every finite test estimate is below budget.
\section{Experimental Setup}
\label{sec:experimental_setup}

\begin{table*}[t] 
\centering
\small
\setlength{\tabcolsep}{6pt}
\begin{tabular}{llrrrc}
\toprule
Task & Dataset & Fit & Cal. & Test & Y=1 (\%) \\
\midrule
Moderation & WildGuardMix \citep{han2024wildguard} & 2000 & 849 & 850 & 44.4 \\
Moderation & Aegis \citep{ghosh2024aegis} & 2000 & 579 & 579 & 67.0 \\
Safety & S-Labs \citep{prompt2026} & 2000 & 1051 & 1050 & 50.0 \\
Safety & ATBench \citep{li2026atbench} & 500 & 250 & 250 & 47.6 \\
Reranking & SciDocs \citep{cohan2020specter} & 2000 & 1000 & 1000 & 50.8 \\
Reranking & StackOverflow \citep{muennighoff2023mteb} & 2000 & 1000 & 1000 & 50.1 \\
\bottomrule
\end{tabular}
\caption{Dataset split details. The last column gives the test positive-event prevalence (base rate).}
\label{tab:splits}
\end{table*}

\subsection{Evaluated Models}
We categorize evaluated models into three groups:
\begin{itemize}
    \item \textbf{Baselines:} For moderation and safety, we evaluate Qwen3Guard-Gen-0.6B \citep{zhao2025qwen3guard} as a specialist baseline. For reranking, we evaluate Voyage rerank-2.5-lite \citep{voyage2026}. We also include PCA-Ridge, an L2-regularized logistic regression baseline fitted per dataset on frozen Qwen3-8B embeddings.
    \item \textbf{System One Models:} We evaluate models specifically designed with probability-valued interfaces, including Jev \citep{jev2026}, GPT-6-Luna (via its typed-decisions endpoint) \citep{gpt6luna2026}, Span-01 \citep{span012026}, and Solar-Decide \citep{solardecide2026}.
    \item \textbf{Base vs.\ Fine-Tuned:} To illustrate adaptation dynamics within a single System One model family, we evaluate Kev-0.8B \citep{kev2026} both as released (Kev-base) and after fine-tuning on $\approx10k$ examples for 37 minutes on an RTX 5090 (Kev-ft). Kev-ft is trained on a mixture of all dataset fit splits to improve generalization, limit overfitting, and save compute cost of six separate neural trainings.
\end{itemize}
Other details are provided in Appendix~\ref{app:models}.

\subsection{Tasks and Datasets}
We evaluate our framework across three workflows with six datasets. \textbf{Moderation} uses WildGuardMix and Aegis prompt labels. \textbf{Safety} covers prompt injection on S-Labs and unsafe recorded trajectories on ATBench. \textbf{Reranking} uses SciDocs and StackOverflow query--document pairs to evaluate reranking as a binary filtering decision. 

For each dataset, we use a single seed, label-stratified fit/calibration/test splits (Table~\ref{tab:splits}). Because most evaluated scorers are general pre-trained models, $D_{\mathrm{fit}}$ is only used to fit the PCA-Ridge baseline, fine-tune the Kev-ft model, and construct candidate threshold grids. $D_{\mathrm{cal}}$ is used for threshold selection, and test data evaluates the final fixed policies. Other details are provided in Appendix~\ref{app:data} and Appendix~\ref{app:data2}.

\subsection{Threshold Selection Policies}
\label{sec:candidates}
We evaluate three threshold policies across risk budgets $\alpha\in\{0.05,0.10,0.15,0.20,0.40,0.50\}$:
\begin{enumerate}
    \item \textbf{Naive:} Sets $\tau=\alpha$. As discussed in Section~\ref{sec:bridge}, this assumes exact calibration and includes no finite-sample margin.
    \item \textbf{Empirical:} Selects $\tau$ that yields the lowest False Positive Rate, provided that the observed calibration FOR is at most $\alpha$.
    \item \textbf{LTT:} Applies Algorithm~\ref{alg:fixed_order_certification} with a statistical error level of $\delta=0.10$ to return the largest threshold in the certified prefix. Details on implementation are in Appendix~\ref{app:lttimp}.
\end{enumerate}

\subsection{Evaluation Metrics}
We evaluate the systems on two axes: underlying score quality and operational policy outcomes.

\paragraph{Score Diagnostics.} We assess raw predictions before thresholding with Area Under the Receiver Operating Characteristic (AUROC) and negative-action Average Precision (AP$_0$) for class separation. We report 10-bin expected calibration error (ECE) and Brier score to measure probability alignment.

\paragraph{Operational Utility and Risk.} At deployment, we report the True Negative Rate (TNR) as our primary measure of system utility (e.g., the rate of benign acceptance or irrelevant document removal). Empirical risk is measured by test-set FOR, $\widehat R_{\mathrm{test}}$. To quantify safety violations, we report the positive excess risk, defined as $e=\max(\widehat R_{\mathrm{test}}-\alpha,0)$. 

To ensure statistical rigor, 95\% bootstrap confidence intervals for score diagnostics and primary-budget policy risk/utility appear in Appendix~\ref{app:full-results}. The resampling recipe is in Appendix~\ref{app:metric-details}.

\paragraph{Availability and Resource Use.} We distinguish no LTT certificate (\nd{ND}), no feasible Empirical policy (\nf{NF}), and a returned policy with an empty negative-action test set (\ud{UD}, undefined FOR). Exceedance fractions use defined test risks; status fractions use settings with defined risk. Mean excess includes zero excess for non-exceedances and is expressed in percentage points. We also report inference cost (USD per 1,000 workload examples) and mean recorded request time (ms/example). Full metric definitions and accounting conventions appear in Appendix~\ref{app:metric-details} and Appendix~\ref{app:efficiency}.
\section{Results}
\label{sec:results}
We answer our three research questions by evaluating the held-out test predictions and saved policy selections across all six datasets. Table~\ref{tab:quality} summarizes the underlying score diagnostics and calibration metrics for each model. Table~\ref{tab:outcomes_main} aggregates the frequency and magnitude of risk exceedances alongside overall policy availability. Table~\ref{tab:policies} details the resulting operational utility and empirical risk at the target budget $\alpha=0.15$. Figure~\ref{fig:ltt-operating} visualizes the LTT-selected operating points across varying risk tolerances to demonstrate model differences on WildGuardMix. Finally, Table~\ref{tab:efficiency-main} reports the inference cost and recorded latency to contextualize operational deployment.

\begin{table*}[!t]
\centering
\footnotesize
\setlength{\tabcolsep}{2pt}
\begin{tabular*}{\textwidth}{@{\extracolsep{\fill}}lcccccccccccc@{}}
\toprule
 & \multicolumn{4}{c}{\textbf{Moderation: WildGuardMix}} & \multicolumn{4}{c}{\textbf{Moderation: Aegis}} & \multicolumn{4}{c}{\textbf{Safety: S-Labs}} \\
\cmidrule(lr){2-5}\cmidrule(lr){6-9}\cmidrule(lr){10-13}
Model & ECE $\downarrow$ & Brier $\downarrow$ & AUC $\uparrow$ & AP$_0\uparrow$ & ECE $\downarrow$ & Brier $\downarrow$ & AUC $\uparrow$ & AP$_0\uparrow$ & ECE $\downarrow$ & Brier $\downarrow$ & AUC $\uparrow$ & AP$_0\uparrow$ \\
\midrule
Qwen3Guard & 0.085 & 0.090 & 0.950 & 0.947 & 0.277 & 0.239 & 0.929 & 0.866 & 0.474 & 0.464 & 0.968 & 0.968 \\
PCA-Ridge$^{*}$ & 0.042 & 0.091 & 0.946 & 0.956 & 0.037 & 0.096 & 0.934 & 0.892 & 0.087 & 0.041 & 0.996 & 0.996 \\
Jev & 0.034 & 0.073 & 0.963 & 0.969 & 0.183 & 0.151 & 0.936 & 0.860 & 0.185 & 0.158 & 0.947 & 0.927 \\
GPT-6-Luna & 0.103 & 0.108 & 0.925 & 0.907 & 0.297 & 0.277 & 0.864 & 0.665 & 0.262 & 0.228 & 0.888 & 0.841 \\
Span-01 & 0.189 & 0.238 & 0.765 & 0.746 & 0.500 & 0.472 & 0.565 & 0.332 & 0.441 & 0.437 & 0.288 & 0.368 \\
Solar-Decide & 0.104 & 0.130 & 0.893 & 0.855 & 0.356 & 0.339 & 0.865 & 0.658 & 0.337 & 0.326 & 0.840 & 0.737 \\
Kev-base & 0.121 & 0.186 & 0.835 & 0.863 & 0.257 & 0.232 & 0.834 & 0.704 & 0.225 & 0.203 & 0.944 & 0.940 \\
Kev-ft$^{*}$ & 0.119 & 0.123 & 0.934 & 0.938 & 0.109 & 0.126 & 0.919 & 0.849 & 0.024 & 0.028 & 0.995 & 0.995 \\
\midrule
 & \multicolumn{4}{c}{\textbf{Safety: ATBench}} & \multicolumn{4}{c}{\textbf{Reranking: SciDocs}} & \multicolumn{4}{c}{\textbf{Reranking: StackOverflow}} \\
\cmidrule(lr){2-5}\cmidrule(lr){6-9}\cmidrule(lr){10-13}
Model & ECE $\downarrow$ & Brier $\downarrow$ & AUC $\uparrow$ & AP$_0\uparrow$ & ECE $\downarrow$ & Brier $\downarrow$ & AUC $\uparrow$ & AP$_0\uparrow$ & ECE $\downarrow$ & Brier $\downarrow$ & AUC $\uparrow$ & AP$_0\uparrow$ \\
\midrule
Reranker-Lite & -- & -- & -- & -- & 0.183 & 0.214 & 0.880 & 0.861 & 0.137 & 0.200 & 0.829 & 0.826 \\
PCA-Ridge$^{*}$ & 0.151 & 0.165 & 0.882 & 0.902 & 0.065 & 0.127 & 0.904 & 0.885 & 0.032 & 0.204 & 0.750 & 0.748 \\
Jev & 0.076 & 0.200 & 0.768 & 0.717 & 0.290 & 0.236 & 0.917 & 0.895 & 0.099 & 0.196 & 0.790 & 0.800 \\
GPT-6-Luna & 0.230 & 0.232 & 0.842 & 0.844 & 0.340 & 0.289 & 0.865 & 0.787 & 0.120 & 0.191 & 0.815 & 0.819 \\
Span-01 & 0.101 & 0.196 & 0.779 & 0.744 & 0.175 & 0.189 & 0.864 & 0.834 & 0.224 & 0.302 & 0.607 & 0.576 \\
Solar-Decide & 0.274 & 0.288 & 0.688 & 0.716 & 0.357 & 0.333 & 0.772 & 0.643 & 0.266 & 0.288 & 0.706 & 0.643 \\
Kev-base & 0.255 & 0.303 & 0.457 & 0.538 & 0.120 & 0.234 & 0.734 & 0.724 & 0.051 & 0.245 & 0.599 & 0.559 \\
Kev-ft$^{*}$ & 0.059 & 0.099 & 0.938 & 0.952 & 0.118 & 0.131 & 0.913 & 0.881 & 0.160 & 0.205 & 0.810 & 0.818 \\
\bottomrule
\end{tabular*}
\caption{Test score diagnostics by dataset. AUC is AUROC and AP$_0$ is negative-action average precision. Reranker scores are not advertised probabilities. 95\% intervals in Appendix~\ref{app:full-results}. --: not evaluated. *: supervised adaptation.}\label{tab:quality}
\end{table*}

\subsection{Q1: Calibration is Task-Dependent and Insufficient for Policy Viability}
\label{sec:rq1}
Table~\ref{tab:quality} shows that probability reliability is fundamentally dataset-dependent and no single architecture universally dominates. A bounded interface alone does not guarantee calibration. For example, Jev's ECE degrades from $0.034$ on WildGuardMix to $0.29$ on SciDocs. Universal claims of ``calibrated probabilities'' obscure this severe variability.

Crucially, low global ECE implies neither effective class separation nor policy viability. On StackOverflow, Kev-base achieves a much better ECE than Kev-ft ($0.051$ vs. $0.16$) but exhibits worse discriminative power (AUROC $0.599$ vs. $0.81$). This dynamic illustrates Proposition~\ref{prop:bridge}: while ECE averages absolute error across all inputs, operational risk depends on the conditional error within the selected negative-action set, $\E[m(S)-S \mid S<\tau]$. Thus, aggregate calibration cannot substitute for rigorous, subset-specific policy evaluation.

\subsection{Q2: LTT Prioritizes Safety While Naive Thresholds Frequently Exceed Budgets}
\label{sec:rq2}

Our analysis shows that naive thresholding can be arbitrarily permissive or conservative depending on the dataset. Naive methods routinely trade away safety for utility: on Aegis at $\alpha=0.15$, Span-01 achieves $95.3\%$ utility under Naive thresholding, but its empirical risk catastrophically explodes to $60.9\%$ (Table~\ref{tab:policies}). Conversely, naive thresholds can be unnecessarily restrictive. On WildGuardMix, Jev's naive policy limits utility to $68.7\%$ (at $3.3\%$ risk), whereas LTT safely expands it to $96.8\%$ while remaining within budget ($12.8\%$ risk).

While the Empirical selection rule can sometimes identify high-utility policies (e.g., Kev-ft reaching $99.6\%$ utility at $12.5\%$ risk on S-Labs), it lacks formal guarantees. Table~\ref{tab:outcomes_main} shows that Empirical selection exceeds the target budget in $51.7\%$ of settings (mean excess $1.17\%$). The Naive rule similarly exceeds the budget $42\%$ of the time, concentrating the most extreme failures (mean excess $7.09\%$, with a maximum exceedance of $73.26$ percentage points at $\alpha=0.05$). 

LTT produces substantially fewer and smaller risk exceedances (just $7.8\%$ frequency, mean excess $0.1\%$) because its statistical certification structurally defaults to non-deployment (ND) rather than operate unsafely. This provides a rigorous finite-sample guarantee that, with probability at least $1-\delta$, the system will stay below the risk budget. Consequently, LTT returns fewer policies under strict budgets (establishing certificates for only $21.3\%$ of policies at $\alpha=0.05$), though availability increases as the budget relaxes ($76.6\%$ at $\alpha=0.5$).

\begin{table}[!t]
\centering
\footnotesize
\setlength{\tabcolsep}{4pt}
\begin{tabular}{@{}llccccc@{}}
\toprule
$\alpha$ & Rule & Eval. & Fail & Exceeds & Mean $e$ & Max $e$ \\
\midrule
5 & Naive & 34 & \ud{13 UD} & 23 (67.6\%) & 14.00 & 73.26 \\
     & Emp.  & 23 & \nf{24 NF} & 19 (82.6\%) & 1.73  & 5.81  \\
     & LTT   & 10 & \nd{37 ND} & 0 (0.0\%)   & 0.00  & 0.00  \\
\midrule
10 & Naive & 38 & \ud{9 UD}  & 22 (57.9\%) & 11.22 & 48.87 \\
     & Emp.  & 30 & \nf{17 NF} & 18 (60.0\%) & 1.36  & 6.45  \\
     & LTT   & 20 & \nd{27 ND} & 1 (5.0\%)   & 0.03  & 0.53  \\
\midrule
15 & Naive & 41 & \ud{6 UD}  & 23 (56.1\%) & 9.81  & 45.86 \\
     & Emp.  & 32 & \nf{15 NF} & 20 (62.5\%) & 1.59  & 8.11  \\
     & LTT   & 24 & \nd{23 ND} & 1 (4.2\%)   & 0.07  & 1.60  \\
\midrule
20 & Naive & 43 & \ud{4 UD}  & 20 (46.5\%) & 8.06  & 41.30 \\
     & Emp.  & 34 & \nf{13 NF} & 19 (55.9\%) & 1.49  & 8.57  \\
     & LTT   & 29 & \nd{18 ND} & 3 (10.3\%)  & 0.21  & 3.47  \\
\midrule
40 & Naive & 47 & \ud{0 UD}  & 13 (27.7\%) & 2.03  & 23.39 \\
     & Emp.  & 44 & \nf{3 NF}  & 20 (45.5\%) & 0.91  & 5.70  \\
     & LTT   & 34 & \nd{13 ND} & 4 (11.8\%)  & 0.09  & 1.69  \\
\midrule
50 & Naive & 47 & \ud{0 UD}  & 4 (8.5\%)   & 0.54  & 13.69 \\
     & Emp.  & 46 & \nf{1 NF}  & 12 (26.1\%) & 0.48  & 4.78  \\
     & LTT   & 36 & \nd{11 ND} & 3 (8.3\%)   & 0.10  & 1.79  \\
\midrule
\rowcolor{gray!20} \textbf{$\sum$} & Naive & 250 & \ud{32 UD} & 105 (42.0\%) & 7.09 & 73.26 \\
\rowcolor{gray!20}                & Emp.  & 209 & \nf{73 NF} & 108 (51.7\%) & 1.17 & 8.57  \\
\rowcolor{gray!20}                & LTT   & 153 & \nd{129 ND}& 12 (7.8\%)   & 0.10 & 3.47  \\
\bottomrule
\end{tabular}
\caption{Outcomes by risk budget $\alpha$ (\%) across all 47 model--dataset pairs. Excess risk is $e=\max(\widehat R-\alpha,0)$.}
\label{tab:outcomes_main}
\end{table}

\subsection{Q3: Adaptation Expands Availability, Though General System One Models Remain Highly Competitive}

General System One interfaces offer a practical plug-and-play starting point, though risk-controlled deployment still requires finite-sample data for policy selection. As Figure~\ref{fig:ltt-operating} shows on WildGuardMix ($\alpha=0.15$), Jev and GPT-6-Luna achieve exceptional LTT utility ($96.8\%$ and $92.6\%$, respectively), rivaling the specialist Qwen3Guard baseline ($95.1\%$). Thus, high utility does not strictly require fine-tuning, although some unadapted models (Span-01, Solar-Decide) fail entirely (Table~\ref{tab:policies}).

However, local open-weight model fine-tuning massively improves policy availability over unadapted base models. At $\alpha=0.15$, Kev-base returns an LTT policy on only one of six datasets, whereas Kev-ft succeeds on all six (Table~\ref{tab:policies}). Across all budgets, this adaptation expands availability from 13/36 to 31/36 settings. Even when both return policies, as on WildGuardMix, utility jumps significantly from $41.9\%$ for Kev-base to $82.5\%$ for Kev-ft (Figure~\ref{fig:ltt-operating}).

Fine-tuning does not universally dominate: Jev outperforms Kev-ft on WildGuardMix with a $96.8\%$ utility at this budget, and the PCA-Ridge baseline remains competitive across several tasks. Because no single architecture solves forced-action deployment, a practical strategy is to first evaluate a general System One model and fall back to local open-weights model adaptation if it proves too conservative. Crucially, all approaches demand strict task-specific empirical validation.

\subsection{Local Open-Weights Models Provide the Most Efficient Cost-Latency Balance}
Operational deployment must balance policy usefulness against inference latency and financial cost (Table~\ref{tab:efficiency-main}). Proprietary closed-weight models exhibit widely varying degrees of cost and latency depending on the endpoint. Crucially, simpler baseline architectures do not automatically guarantee faster execution. The PCA-Ridge baseline is exceptionally cheap to evaluate numerically (\$$0.0026$ per 1,000 examples) but it has a severe latency penalty ($5{,}247$ ms/example). This delay comes from the requisite embedding extraction pipeline, which must fully process the input through Qwen3-8B before the lightweight logistic regression can execute. By contrast, an end-to-end local fine-tuned model like Kev-ft provides the most optimal balance, delivering highly competitive safety utility at just \$$0.0030$ and $98$ ms/example, avoiding both the API costs of proprietary models and the sequential embedding bottlenecks of PCA-Ridge.

\begin{figure}[t]
\centering
\includegraphics[width=\columnwidth]{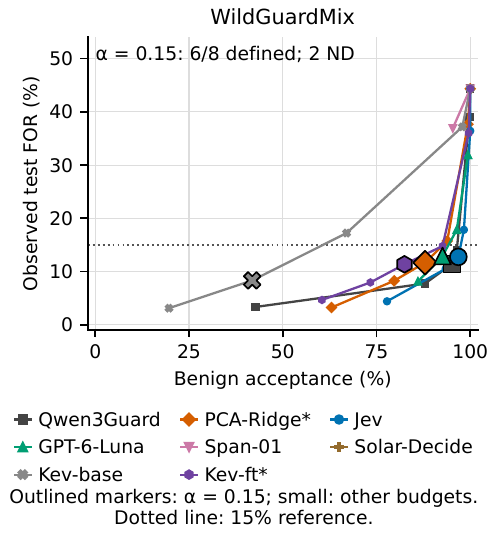}
\caption{LTT-selected risk--utility operating points on WildGuardMix. Lower test FOR and greater benign acceptance are preferable. Missing policies are omitted.}
\label{fig:ltt-operating}
\end{figure}

\begin{table*}[!t]
\centering
\footnotesize
\setlength{\tabcolsep}{2pt}
\begin{tabular*}{\textwidth}{@{\extracolsep{\fill}}lcccccccccccc@{}}
\toprule
 & \multicolumn{6}{c}{\textbf{Moderation: WildGuardMix}} & \multicolumn{6}{c}{\textbf{Moderation: Aegis}} \\
\cmidrule(lr){2-7}\cmidrule(lr){8-13}
 & \multicolumn{2}{c}{Naive} & \multicolumn{2}{c}{Empirical} & \multicolumn{2}{c}{LTT} & \multicolumn{2}{c}{Naive} & \multicolumn{2}{c}{Empirical} & \multicolumn{2}{c}{LTT} \\
Model & Risk & Util & Risk & Util & Risk & Util & Risk & Util & Risk & Util & Risk & Util \\
\midrule
Qwen3Guard & 10.8 & 94.3 & 12.2 & 95.8 & 11.4 & 95.1 & \hr{38.4} & 93.2 & \hr{18.2} & 72.8 & 11.5 & 60.7 \\
PCA-Ridge$^{*}$ & 4.2 & 67.4 & 13.1 & 91.3 & 11.7 & 87.9 & 1.4 & 36.1 & \hr{17.3} & 72.8 & 9.5 & 64.9 \\
Jev & 3.3 & 68.7 & \hr{16.2} & \textbf{98.1} & 12.8 & 96.8 & \hr{20.8} & 83.8 & \hr{23.1} & 85.3 & 11.7 & 67.0 \\
GPT-6-Luna & 12.6 & 92.6 & \hr{15.6} & 96.0 & 12.9 & 92.6 & \hr{43.4} & 94.8 & \nf{NF} & \nf{NF} & \nd{ND} & \nd{ND} \\
Span-01 & \hr{26.3} & 81.2 & \nf{NF} & \nf{NF} & \nd{ND} & \nd{ND} & \hr{60.9} & 95.3 & \nf{NF} & \nf{NF} & \nd{ND} & \nd{ND} \\
Solar-Decide & \hr{18.1} & 93.9 & 13.9 & 84.8 & \nd{ND} & \nd{ND} & \hr{50.7} & \textbf{97.9} & \nf{NF} & \nf{NF} & \nd{ND} & \nd{ND} \\
Kev-base & 0.0 & 4.0 & \hr{15.4} & 62.6 & 8.3 & 41.9 & 12.5 & 7.3 & \hr{18.6} & 29.8 & \nd{ND} & \nd{ND} \\
Kev-ft$^{*}$ & 11.3 & 82.9 & 13.4 & 87.5 & 11.4 & 82.5 & \hr{17.7} & 70.7 & \hr{19.5} & 71.2 & 14.5 & 64.9 \\
\midrule
 & \multicolumn{6}{c}{\textbf{Safety: S-Labs}} & \multicolumn{6}{c}{\textbf{Safety: ATBench}} \\
\cmidrule(lr){2-7}\cmidrule(lr){8-13}
 & \multicolumn{2}{c}{Naive} & \multicolumn{2}{c}{Empirical} & \multicolumn{2}{c}{LTT} & \multicolumn{2}{c}{Naive} & \multicolumn{2}{c}{Empirical} & \multicolumn{2}{c}{LTT} \\
Model & Risk & Util & Risk & Util & Risk & Util & Risk & Util & Risk & Util & Risk & Util \\
\midrule
Qwen3Guard & \hr{47.8} & \textbf{100.0} & 14.7 & 94.1 & 13.5 & 93.1 & -- & -- & -- & -- & -- & -- \\
PCA-Ridge$^{*}$ & 0.4 & 87.0 & 11.6 & 99.8 & 10.0 & 99.8 & 0.0 & 1.5 & 13.6 & 72.5 & 10.1 & 67.9 \\
Jev & 11.2 & 86.1 & 13.9 & 95.4 & 10.2 & 79.0 & \hr{33.3} & 3.1 & \nf{NF} & \nf{NF} & \nd{ND} & \nd{ND} \\
GPT-6-Luna & \hr{27.0} & 94.3 & 13.2 & 71.4 & \nd{ND} & \nd{ND} & 5.6 & 26.0 & \hr{16.3} & 62.6 & 12.0 & 55.7 \\
Span-01 & \hr{47.5} & \textbf{100.0} & \nf{NF} & \nf{NF} & \nd{ND} & \nd{ND} & \hr{27.3} & 12.2 & \nf{NF} & \nf{NF} & \nd{ND} & \nd{ND} \\
Solar-Decide & \hr{41.0} & \textbf{100.0} & \nf{NF} & \nf{NF} & \nd{ND} & \nd{ND} & \hr{28.6} & 61.1 & \nf{NF} & \nf{NF} & \nd{ND} & \nd{ND} \\
Kev-base & \ud{UD} & \ud{UD} & 14.6 & 88.2 & \nd{ND} & \nd{ND} & \ud{UD} & \ud{UD} & \nf{NF} & \nf{NF} & \nd{ND} & \nd{ND} \\
Kev-ft$^{*}$ & 3.6 & 97.1 & 12.5 & 99.6 & 11.1 & 99.6 & 5.1 & 71.0 & \hr{15.7} & \textbf{90.1} & 14.6 & 89.3 \\
\midrule
 & \multicolumn{6}{c}{\textbf{Reranking: SciDocs}} & \multicolumn{6}{c}{\textbf{Reranking: StackOverflow}} \\
\cmidrule(lr){2-7}\cmidrule(lr){8-13}
 & \multicolumn{2}{c}{Naive} & \multicolumn{2}{c}{Empirical} & \multicolumn{2}{c}{LTT} & \multicolumn{2}{c}{Naive} & \multicolumn{2}{c}{Empirical} & \multicolumn{2}{c}{LTT} \\
Model & Risk & Util & Risk & Util & Risk & Util & Risk & Util & Risk & Util & Risk & Util \\
\midrule
Reranker-Lite & \ud{UD} & \ud{UD} & 11.3 & 56.1 & \nd{ND} & \nd{ND} & \ud{UD} & \ud{UD} & \hr{18.5} & 57.3 & 14.0 & 51.7 \\
PCA-Ridge$^{*}$ & 7.6 & 24.6 & \hr{17.2} & 78.0 & 13.9 & 72.8 & 8.8 & 6.2 & \hr{16.5} & 23.2 & \nd{ND} & \nd{ND} \\
Jev & \hr{22.8} & 90.9 & \hr{18.8} & 85.8 & 13.7 & 76.8 & 8.1 & 25.1 & \hr{20.5} & 54.3 & \hr{16.6} & 44.3 \\
GPT-6-Luna & \hr{31.5} & \textbf{99.0} & \nf{NF} & \nf{NF} & \nd{ND} & \nd{ND} & 10.1 & 40.9 & \hr{15.9} & 55.3 & 11.3 & 42.5 \\
Span-01 & \hr{16.4} & 65.2 & \hr{15.9} & 63.4 & \nd{ND} & \nd{ND} & \hr{41.2} & 47.5 & \nf{NF} & \nf{NF} & \nd{ND} & \nd{ND} \\
Solar-Decide & \hr{40.1} & 98.0 & \nf{NF} & \nf{NF} & \nd{ND} & \nd{ND} & \hr{34.6} & \textbf{73.3} & \nf{NF} & \nf{NF} & \nd{ND} & \nd{ND} \\
Kev-base & \ud{UD} & \ud{UD} & \hr{19.4} & 25.4 & \nd{ND} & \nd{ND} & \ud{UD} & \ud{UD} & \nf{NF} & \nf{NF} & \nd{ND} & \nd{ND} \\
Kev-ft$^{*}$ & \hr{15.2} & 79.5 & \hr{15.9} & 83.9 & 14.8 & 78.5 & \hr{16.7} & 54.9 & \hr{16.2} & 53.9 & 13.4 & 50.3 \\
\bottomrule
\end{tabular*}
\caption{Test risk and utility (\%) at $\alpha=0.15$. Risk is FOR; utility is TNR (benign acceptance for safety, irrelevant removal for filtering). \hr{Red risk values} exceed the budget. \nd{ND}: no LTT certificate; \nf{NF}: no feasible empirical policy; \ud{UD}: undefined test FOR. --: not evaluated. *: supervised adaptation. Bold utility is the largest value per dataset.}\label{tab:policies}
\end{table*}

\begin{table*}[!t]
\centering\footnotesize
\setlength{\tabcolsep}{2pt}
\begin{tabular*}{\textwidth}{@{\extracolsep{\fill}}lcccccccccccc@{}}
\toprule
 & \multicolumn{2}{c}{\textbf{MD: WildGuardMix}} & \multicolumn{2}{c}{\textbf{MD: Aegis}} & \multicolumn{2}{c}{\textbf{SF: S-Labs}} & \multicolumn{2}{c}{\textbf{SF: ATBench}} & \multicolumn{2}{c}{\textbf{RK: SciDocs}} & \multicolumn{2}{c}{\textbf{RK: StackOverflow}} \\
\cmidrule(lr){2-3}\cmidrule(lr){4-5}\cmidrule(lr){6-7}\cmidrule(lr){8-9}\cmidrule(lr){10-11}\cmidrule(lr){12-13}
Model & Cost & Time & Cost & Time & Cost & Time & Cost & Time & Cost & Time & Cost & Time \\
\midrule
Qwen3Guard & 0.0071 & 231 & 0.0077 & 252 & 0.0065 & 213 & -- & -- & -- & -- & -- & -- \\
Reranker-Lite & -- & -- & -- & -- & -- & -- & -- & -- & 0.0013 & 224 & 0.0011 & 183 \\
PCA-Ridge$^{*}$ & 0.0026 & 5247 & 0.0039 & 1880 & 0.0013 & 5422 & 0.0215 & 14541 & 0.0014 & 5707 & 0.0012 & 3765 \\
Jev & 0.0216 & 383 & 0.0241 & 248 & 0.0204 & 379 & 0.0757 & 261 & 0.0175 & 364 & 0.0172 & 347 \\
GPT-6-Luna & 0.0375 & 320 & 0.0435 & 225 & 0.0347 & 186 & 0.1507 & 294 & 0.0283 & 207 & 0.0278 & 167 \\
Span-01 & 0.0084 & 672 & 0.0095 & 286 & 0.0081 & 648 & 0.0320 & 1059 & 0.0053 & 638 & 0.0050 & 710 \\
Solar-Decide & 0.0298 & 1361 & 0.0329 & 2076 & 0.0281 & 1038 & 0.0977 & 6303 & 0.0252 & 1536 & 0.0251 & 2351 \\
Kev-base & 0.0029 & 95 & 0.0030 & 99 & 0.0030 & 97 & 0.0076 & 249 & 0.0027 & 87 & 0.0026 & 85 \\
Kev-ft$^{*}$ & 0.0030 & 98 & 0.0032 & 104 & 0.0030 & 97 & 0.0074 & 241 & 0.0028 & 91 & 0.0027 & 88 \\
\bottomrule
\end{tabular*}
\caption{Cost (USD/1k) and latency (ms). MD: Moderation. SF: Safety. RK: Reranking. --: not evaluated.}\label{tab:efficiency-main}
\end{table*}
\section{Discussion and Conclusion}
\label{sec:conclusion}
Deploying System One models for asymmetric decisions, where one error type carries severe consequences, demands more than aggregate calibration. Results show that favorable global diagnostics do not guarantee utility, and naive or empirical thresholding often lead to severe, silent risk violations. 

LTT offers a rigorous statistical alternative, controlling the useful action region without trusting raw scores as true probabilities. The operational price of this safety is reduced policy availability. LTT minimizes budget exceedances by returning non-deployment when a certificate cannot be established, correctly defaulting to a conservative fallback (e.g., blocking a prompt or retaining a document) rather than operating unsafely. However, our findings show that this availability bottleneck can be mitigated: when general interfaces fail to yield a certificate, adapting lightweight, local open-weights models provides a highly reliable and cost-efficient pathway to recover certified policies, though this comes with the cost of training data.

Future work should extend LTT certification to multi-model cascades to optimize the balance between safety, latency, and inference cost, and explore how larger calibration samples tighten risk bounds to increase policy availability.
\section*{Limitations}
First, the finite-sample guarantee in Theorem~\ref{thm:ltt} relies on calibration and deployment examples being drawn i.i.d.\ from the same distribution $P$. Our six English-language benchmarks do not establish performance under deployment or adversarial shift, where maintaining risk control would require periodic recalibration or distribution-shift adaptations. Moreover, ATBench evaluates retrospective classification of truncated trajectories, and our reranking tasks use a constructed document mixture rather than an end-to-end retrieval pipeline; both choices affect the meaning and transfer of the measured FOR.

Second, exact binomial certification imposes a fundamental sample-size requirement on the calibration set. Even with zero observed errors ($K_j=0$), certifying a threshold at budget $\alpha$ and error level $\delta$ requires at least $\lceil\log\delta/\log(1-\alpha)\rceil$ selected calibration examples (Appendix~\ref{app:ltt_impl}). Consequently, when calibration sets are small or risk budgets are strict (e.g., $\alpha=0.05$), LTT frequently defaults to non-deployment ($\bot$), trading policy availability for safety.

Finally, while adapting a lightweight open-weights model (Kev-ft) reliably recovers certified utility when zero-shot System One interfaces fail to certify, this pathway requires assembling labeled training data ($D_{\mathrm{fit}}$) and local training compute, which may not be feasible in cold-start deployments.

\section*{Ethical Considerations}
This study evaluates datasets containing harmful, offensive, and adversarial text. We analyze dataset labels and model scores rather than distributing new attack payloads, and this paper contains no raw harmful examples. 

A low measured FOR is not a blanket safety endorsement: false negatives expose users to harm, while false positives suppress benign expression. Aggregate metrics do not establish equitable error rates across demographic groups or topics. In filtering, document removal can skew downstream generation even at a low discarded-set contamination rate. Safe deployment requires an application-specific risk definition, continuous monitoring, and an explicit fallback action when LTT returns no valid policy ($\bot$).

Benchmark annotations embody specific normative safety and relevance judgments that should not be mistaken for universal definitions. Furthermore, this study cannot guarantee the absence of training-data contamination for externally supplied APIs. Model and data identifiers are provided to support independent scrutiny; any redistribution must comply with the original licenses.

\bibliography{references}
\clearpage
\appendix
\onecolumn
\section{Calibration--Policy Identity Supporting Proof}\label{app:proofs}

\begin{proof}[Proof of Proposition~\ref{prop:bridge}]
For $\mu(\tau)>0$, the event $\{S<\tau\}$ is measurable with respect to $S$. Iterated expectation gives
\begin{equation*}
\begin{split}
R(\tau) &= \E[Y\mid S<\tau] \\
        &= \E[m(S)\mid S<\tau] \\
        &= \E[S\mid S<\tau]+\E[m(S)-S\mid S<\tau].
\end{split}
\end{equation*}

The first term is at most $\tau$, and the second is at most
\begin{align*}
&\E[|m(S)-S|\mid S<\tau] \\
&\quad =\frac{\E[|m(S)-S|\ind\{S<\tau\}]}{\mu(\tau)}.
\end{align*}
This proves the bound. Under exact calibration $m(S)=S$ almost surely, the discrepancy vanishes and $R(\tau)\le\tau$.
\end{proof}

\section{Detailed Experimental Protocol}
\label{app:protocol}

\subsection{Model Configurations and Supervision}\label{app:models}
\paragraph{System One APIs and Specialists.} The evaluated proprietary interfaces are \texttt{typesafe/jev-1.13}, \texttt{respan/span-01}, \texttt{upstage/solar-decide}, and \texttt{openai/gpt-6-luna-decisions}. The filtering specialist relies on \texttt{voyageai/rerank-2.5-lite}, whose relevance score is clipped to $[0,1]$. 

\paragraph{Local Open-Weights Models.} 
Kev-base initializes from \texttt{jaredpalmer/kev-0.8b}. Kev-ft configuration warm-starts its rank-16 LoRA adapters and 256-dimensional pointer head on a frozen \path{Qwen/Qwen3.5-0.8B-Base} backbone. Training uses two epochs, seed 0, AdamW with weight decay 0.01, and a OneCycle schedule with peak learning rate $2\times10^{-5}$ for both adapters and head and a 10\% increasing phase. The microbatch is 2 with four accumulation steps on one device (effective batch 8); gradient norms are clipped at 1.0. Computation uses bfloat16 with float32 trainable weights, gradient checkpointing, and a maximum state length of 2,048 tokens. The wrapper truncates inputs to fit the training context. The final checkpoint after two epochs is used.

The pooled training data contains 9,988 records: 1,600 each from WildGuardMix, S-Labs, SciDocs, StackOverflow, and BeaverTails; 1,588 from Aegis; and 400 from ATBench. Thus adaptation also uses BeaverTails, which is not an evaluation benchmark in this paper. The generator draws from fit pools, deduplicates exact rendered states across sources, and reserves 20\% per source for a separate held-out diagnostic. 

\paragraph{PCA-Ridge Baseline.} 
This baseline extracts instruction-templated embeddings using \path{qwen/qwen3-embedding-8b}, applies L2 normalization and a 100-dimensional PCA projection, and fits a logistic regression classifier ($C=1.0$, solver \texttt{lbfgs}, and 2,000 maximum iterations). Both PCA and the classifier are fit independently on each dataset's specific fit split.

\paragraph{Qwen3Guard Score Extraction.}\label{app:qwen}
Qwen3Guard (\texttt{Qwen/Qwen3Guard-Gen-0.6B}) emits specific categorical tokens rather than direct probabilities. For moderation tasks, we compute the score as the full-vocabulary softmax mass on Unsafe token variants following the \texttt{Safety:} marker.

\subsection{Data Splitting and Evaluation Roles}\label{app:data}
We partition datasets into disjoint Fit, Calibration, and Test pools to strictly separate model training, threshold selection, and final evaluation. Fit data is exclusively used for model fine-tuning (Kev-ft), baseline fitting (PCA-Ridge), and constructing the candidate threshold grid ($\mathcal{T}$). Calibration data ($D_{\mathrm{cal}}$) is used strictly for threshold selection (LTT and Empirical), and Test data evaluates the resulting fixed policies.

Calibration and Test splits use label-stratified sampling. Safety/moderation tasks also group repeated prompts or texts during overlap screening and partitioning. For reranking tasks (SciDocs and StackOverflow), we construct balanced query--candidate pairs, fit and evaluation pools are separated by query ID, and select one candidate per query using a deterministic query hash to choose a relevant or irrelevant candidate. This avoids counting multiple candidates for one query as independent rows.

\subsection{Input Formatting Specifications}\label{app:data2}
Input rendering depends on the task domain. Moderation queries evaluate solely the user's prompt. For safety evaluation (ATBench), the model evaluates serialized agent trajectories encompassing actions, disclosures, and environmental thoughts. To maintain context limits, these trajectories are bounded to 600 characters for tool names, 1,200 characters per turn, and 12,000 characters overall (with truncation markers appended where applicable). 

\subsection{LTT Implementation Details}\label{app:lttimp}
\label{app:ltt_impl}
The candidate threshold family $\mathcal{T}$ is constructed from up to 200 sorted fit-score midpoints on $D_{\mathrm{fit}}$, plus the endpoints $\{0, 1\}$. Because $\mathcal{T}$ is defined strictly on $D_{\mathrm{fit}}$, the candidate testing sequence is fixed independently of $D_{\mathrm{cal}}$ \citep[Section~2.3.1]{angelopoulos2025learn}. Under the rule $s(X) < \tau$, a threshold of $0$ selects an empty set, while $1$ selects all scores strictly below one.

Following the selective classification protocol of \citet[Section~3.2]{angelopoulos2025learn}, we compute $p$-values on the selected calibration subset using the exact binomial tail bound. Even when zero errors are observed ($K_j=0$), the binomial $p$-value reduces to $p_j = (1-\alpha)^{N_j}$. Satisfying the certification condition $p_j \leq \delta$ therefore requires $(1-\alpha)^{N_j} \leq \delta$; taking logarithms and dividing by $\log(1-\alpha) < 0$ reverses the inequality and yields the minimum certifiable support size:
\begin{equation}
    N_j \geq \left\lceil\frac{\log\delta}{\log(1-\alpha)}\right\rceil.
    \label{eq:sample}
\end{equation}
At $\delta=0.10$, Equation~\ref{eq:sample} requires at least 45 selected calibration examples for $\alpha=0.05$, and 22 for $\alpha=0.10$. Following the low-support clipping convention of \citet[Section~3.2]{angelopoulos2025learn} ($n(\mathcal{T}_\lambda) \geq 25$), we screen out candidates with calibration support below $\max(25, \lceil\log\delta/\log(1-\alpha)\rceil)$ so that uncertifiable low-support thresholds near $\tau=0$ do not prematurely halt fixed-sequence testing. Our Empirical baseline ($\widehat{R}_{\mathrm{cal},j} \leq \alpha$) follows \citet[Supplement Section~1.1]{angelopoulos2025learn} on the same grid with a support floor of 25.

\subsection{Metrics and Uncertainty Quantification}\label{app:metric-details}
Performance metrics are defined standardly based on the confusion matrix elements (TN, FN, FP, TP). The target risk is the False Omission Rate: $\mathrm{FOR} = \mathrm{FN}/(\mathrm{TN}+\mathrm{FN})$. The primary utility metric is Benign Acceptance (TNR) for safety tasks and Irrelevant Removal for filtering tasks, computed as $\mathrm{TN}/(\mathrm{TN}+\mathrm{FP})$. Positive excess risk is computed in percentage points as $e = \max(\widehat R-\alpha, 0)$. 

Calibration is measured via the Brier Score ($\widehat{\mathrm{BS}}$) and Expected Calibration Error ($\widehat{\mathrm{ECE}}$). For a test sample of size $n$, partitioned into 10 equal-width bins $b$ with respective sizes $|I_b|$, mean scores $\bar s_b$, and positive label frequencies $\bar y_b$:

\[
\widehat{\mathrm{ECE}}=\sum_b\frac{|I_b|}{n}|\bar y_b-\bar s_b|,
\]
\[
\widehat{\mathrm{BS}}=\frac{1}{n}\sum_i(s_i-Y_i)^2.
\]

Uncertainty is quantified using 1,000 unstratified bootstrap resamples (seed 42) applied to the test rows, evaluating fixed thresholds and models. Confidence intervals are reported using the 2.5th and 97.5th percentiles via linear interpolation. 

\subsection{Resource Accounting}
\label{app:efficiency}
Cost and latency metrics trace the fit, calibration, and test inference workloads. Local inference is estimated at a standard rate of \$0.11 per GPU-hour, while proprietary API costs use respective provider accounting. Because API pricing structures can change, we note that all experiments and cost calculations were conducted between late September and early October 2026. Costs are normalized per 1,000 workload examples; latency is reported in milliseconds per example. These figures represent recorded historical overheads rather than strictly controlled, hardware-synchronized microbenchmarks, and are intended to provide descriptive operational context.

\subsection{Artifact Access and Terms}
\label{app:artifacts}
The public dataset cards list \href{https://huggingface.co/datasets/allenai/wildguardmix}{WildGuardMix} under ODC-BY (with AI2 Responsible Use Guidelines), \href{https://huggingface.co/datasets/nvidia/Aegis-AI-Content-Safety-Dataset-1.0}{Aegis 1.0} under CC-BY-4.0, \href{https://huggingface.co/datasets/S-Labs/prompt-injection-dataset}{S-Labs} under MIT, and \href{https://huggingface.co/datasets/AI45Research/ATBench}{ATBench} under Apache-2.0. The additional training source \href{https://huggingface.co/datasets/PKU-Alignment/BeaverTails}{BeaverTails} lists CC-BY-NC-4.0. \href{https://huggingface.co/datasets/mteb/scidocs-reranking}{SciDocs} and \href{https://huggingface.co/datasets/mteb/stackoverflowdupquestions-reranking}{StackOverflow} derivative cards did not expose an explicit license declaration; this is a documentation gap, not permission for unrestricted redistribution. Original-source and derivative terms must be considered.

The open-weights models (\href{https://huggingface.co/Qwen/Qwen3Guard-Gen-0.6B}{Qwen3Guard-Gen-0.6B}, \href{https://huggingface.co/Qwen/Qwen3-Embedding-8B}{Qwen3-Embedding-8B}, and \href{https://huggingface.co/jaredpalmer/kev-0.8b}{Kev-0.8B}) are released under the Apache-2.0 license, while the proprietary endpoints (Jev, GPT-6-Luna, Span-01, Solar-Decide, and Voyage Rerank 2.5 Lite) were accessed under their respective providers' standard commercial terms of service. All experiments were conducted strictly as offline benchmark evaluations, which is consistent with the intended research use of these artifacts and respects model-card restrictions against unattended high-stakes deployment.

\section{Additional Score and Policy Results}\label{app:full-results}
\subsection{Score Diagnostics with 95\% Intervals}

\begin{table*}[b!]
\centering\footnotesize
\setlength{\tabcolsep}{3pt}
\renewcommand{\arraystretch}{1.04}
\begin{tabular*}{\textwidth}{@{\extracolsep{\fill}}llcccc@{}}
\toprule
\textbf{Task / Dataset} & \textbf{Model} & ECE $\downarrow$ & Brier $\downarrow$ & AUROC $\uparrow$ & AP$_0$ $\uparrow$ \\
\midrule
\textbf{Moderation} 
& Qwen3Guard & 0.085 [0.068, 0.106] & 0.090 [0.073, 0.108] & 0.950 [0.935, 0.965] & 0.947 [0.924, 0.967] \\
WildGuardMix 
& PCA-Ridge$^{*}$ & 0.042 [0.034, 0.067] & 0.091 [0.078, 0.103] & 0.946 [0.931, 0.959] & 0.956 [0.940, 0.968] \\
& Jev & 0.034 [0.028, 0.056] & 0.073 [0.063, 0.084] & 0.963 [0.951, 0.973] & 0.969 [0.957, 0.979] \\
& GPT-6-Luna & 0.103 [0.084, 0.125] & 0.108 [0.089, 0.126] & 0.925 [0.906, 0.942] & 0.907 [0.880, 0.931] \\
& Span-01 & 0.189 [0.166, 0.224] & 0.238 [0.216, 0.261] & 0.765 [0.732, 0.799] & 0.746 [0.701, 0.795] \\
& Solar-Decide & 0.104 [0.081, 0.128] & 0.130 [0.108, 0.150] & 0.893 [0.867, 0.917] & 0.855 [0.814, 0.900] \\
& Kev-base & 0.121 [0.097, 0.149] & 0.186 [0.176, 0.196] & 0.835 [0.809, 0.862] & 0.863 [0.835, 0.891] \\
\bottomrule
\end{tabular*}
\caption{Test score diagnostics with 95\% intervals [lower, upper]. *: supervised adaptation.}\label{tab:quality-ci-all}
\end{table*}
\clearpage
\subsection{Risk and Utility at Every Budget, with 95\% Intervals}
These tables extend the main Table~\ref{tab:policies} layout to all $\alpha$ with its intervals. A zero-width interval after no errors does not establish zero population risk.
\begin{table*}[b!]
\centering\footnotesize
\setlength{\tabcolsep}{2pt}
\renewcommand{\arraystretch}{0.94}
\begin{tabularx}{\textwidth}{@{}lc*{6}{>{\centering\arraybackslash}X}@{}}
\toprule
 &  & \multicolumn{3}{c}{\textbf{Moderation: WildGuardMix}} & \multicolumn{3}{c}{\textbf{Moderation: Aegis}} \\
\cmidrule(lr){3-5}\cmidrule(lr){6-8}
Model & Metric & Naive & Empirical & LTT & Naive & Empirical & LTT \\
\midrule
\rowcolor{gray!12}Qwen3Guard & Risk & \hr{9.2}\hspace{1pt}[6,13] & \hr{6.2}\hspace{1pt}[3,9] & 3.3\hspace{1pt}[1,7] & \hr{32.4}\hspace{1pt}[26,39] & \hr{5.9}\hspace{1pt}[1,11] & \nd{ND} \\
 & Utility & 91.3\hspace{1pt}[88,94] & 72.9\hspace{1pt}[69,78] & 42.7\hspace{1pt}[38,48] & 89.5\hspace{1pt}[85,94] & 50.3\hspace{1pt}[43,58] & \nd{ND} \\
\rowcolor{gray!12}PCA-Ridge$^{*}$ & Risk & 0.0\hspace{1pt}[0,0] & \hr{5.1}\hspace{1pt}[3,8] & 3.2\hspace{1pt}[1,6] & \ud{UD} & \hr{8.7}\hspace{1pt}[3,14] & \nd{ND} \\
 & Utility & 18.8\hspace{1pt}[15,23] & 70.6\hspace{1pt}[66,75] & 63.0\hspace{1pt}[58,68] & \ud{UD} & 60.7\hspace{1pt}[53,68] & \nd{ND} \\
\rowcolor{gray!12}Jev & Risk & 0.0\hspace{1pt}[0,0] & \hr{6.6}\hspace{1pt}[4,10] & 4.4\hspace{1pt}[2,7] & \hr{7.7}\hspace{1pt}[1,16] & \hr{7.7}\hspace{1pt}[1,16] & \nd{ND} \\
 & Utility & 37.6\hspace{1pt}[33,42] & 83.3\hspace{1pt}[80,87] & 77.8\hspace{1pt}[74,82] & 25.1\hspace{1pt}[18,32] & 25.1\hspace{1pt}[18,32] & \nd{ND} \\
\bottomrule
\end{tabularx}
\caption{Test risk and utility at $\alpha=0.05$ (\%). Shaded Risk rows report FOR; white Utility rows report TNR. Entries are estimates [95\% CI], with bounds rounded outward to whole percentage points. Red risk exceeds the budget. \nd{ND}/\nf{NF}: no returned policy; \ud{UD}: undefined FOR (zero TNR). --: not evaluated. *: supervised adaptation.}\label{tab:paired-ci-0.05}
\end{table*}

\begin{table*}[h]
\centering\footnotesize
\setlength{\tabcolsep}{2pt}
\renewcommand{\arraystretch}{0.94}
\begin{tabularx}{\textwidth}{@{}lc*{6}{>{\centering\arraybackslash}X}@{}}
\toprule
 &  & \multicolumn{3}{c}{\textbf{Moderation: WildGuardMix}} & \multicolumn{3}{c}{\textbf{Moderation: Aegis}} \\
\cmidrule(lr){3-5}\cmidrule(lr){6-8}
Model & Metric & Naive & Empirical & LTT & Naive & Empirical & LTT \\
\midrule
\rowcolor{gray!12}Qwen3Guard & Risk & 9.7\hspace{1pt}[7,13] & 9.4\hspace{1pt}[7,13] & 7.8\hspace{1pt}[5,11] & \hr{36.4}\hspace{1pt}[30,42] & \hr{11.5}\hspace{1pt}[6,17] & 5.9\hspace{1pt}[1,11] \\
 & Utility & 94.1\hspace{1pt}[91,97] & 93.2\hspace{1pt}[90,96] & 87.9\hspace{1pt}[85,91] & 91.6\hspace{1pt}[87,96] & 60.7\hspace{1pt}[53,68] & 50.3\hspace{1pt}[43,58] \\
\rowcolor{gray!12}PCA-Ridge$^{*}$ & Risk & 2.0\hspace{1pt}[0,4] & 9.8\hspace{1pt}[7,13] & 8.3\hspace{1pt}[5,12] & 0.0\hspace{1pt}[0,0] & 9.5\hspace{1pt}[4,15] & \nd{ND} \\
 & Utility & 51.6\hspace{1pt}[46,57] & 83.7\hspace{1pt}[80,88] & 79.7\hspace{1pt}[76,84] & 14.7\hspace{1pt}[9,20] & 64.9\hspace{1pt}[58,72] & \nd{ND} \\
\rowcolor{gray!12}Jev & Risk & 3.3\hspace{1pt}[1,6] & \hr{11.9}\hspace{1pt}[9,15] & \hr{10.5}\hspace{1pt}[7,14] & \hr{11.7}\hspace{1pt}[6,18] & \hr{11.5}\hspace{1pt}[6,18] & \nd{ND} \\
 & Utility & 56.4\hspace{1pt}[51,61] & 95.3\hspace{1pt}[93,98] & 93.4\hspace{1pt}[90,96] & 67.0\hspace{1pt}[60,75] & 60.7\hspace{1pt}[53,68] & \nd{ND} \\
\bottomrule
\end{tabularx}
\caption{Test risk and utility at $\alpha=0.10$ (\%). Shaded Risk rows report FOR; white Utility rows report TNR. Entries are estimates [95\% CI], with bounds rounded outward to whole percentage points. Red risk exceeds the budget. \nd{ND}/\nf{NF}: no returned policy; \ud{UD}: undefined FOR (zero TNR). --: not evaluated. *: supervised adaptation.}\label{tab:paired-ci-0.10}
\end{table*}

\begin{table*}[h]
\centering\footnotesize
\setlength{\tabcolsep}{2pt}
\renewcommand{\arraystretch}{0.94}
\begin{tabularx}{\textwidth}{@{}lc*{6}{>{\centering\arraybackslash}X}@{}}
\toprule
 &  & \multicolumn{3}{c}{\textbf{Moderation: WildGuardMix}} & \multicolumn{3}{c}{\textbf{Moderation: Aegis}} \\
\cmidrule(lr){3-5}\cmidrule(lr){6-8}
Model & Metric & Naive & Empirical & LTT & Naive & Empirical & LTT \\
\midrule
\rowcolor{gray!12}Qwen3Guard & Risk & 10.8\hspace{1pt}[8,14] & 12.2\hspace{1pt}[9,16] & 11.4\hspace{1pt}[8,15] & \hr{38.4}\hspace{1pt}[32,44] & \hr{18.2}\hspace{1pt}[12,24] & 11.5\hspace{1pt}[6,17] \\
 & Utility & 94.3\hspace{1pt}[92,97] & 95.8\hspace{1pt}[93,98] & 95.1\hspace{1pt}[93,98] & 93.2\hspace{1pt}[89,97] & 72.8\hspace{1pt}[66,79] & 60.7\hspace{1pt}[53,68] \\
\rowcolor{gray!12}PCA-Ridge$^{*}$ & Risk & 4.2\hspace{1pt}[2,7] & 13.1\hspace{1pt}[9,16] & 11.7\hspace{1pt}[8,15] & 1.4\hspace{1pt}[0,5] & \hr{17.3}\hspace{1pt}[11,24] & 9.5\hspace{1pt}[4,15] \\
 & Utility & 67.4\hspace{1pt}[63,72] & 91.3\hspace{1pt}[88,94] & 87.9\hspace{1pt}[84,91] & 36.1\hspace{1pt}[29,43] & 72.8\hspace{1pt}[66,80] & 64.9\hspace{1pt}[58,72] \\
\rowcolor{gray!12}Jev & Risk & 3.3\hspace{1pt}[1,6] & \hr{16.2}\hspace{1pt}[13,20] & 12.8\hspace{1pt}[9,16] & \hr{20.8}\hspace{1pt}[15,27] & \hr{23.1}\hspace{1pt}[17,30] & 11.7\hspace{1pt}[6,18] \\
 & Utility & 68.7\hspace{1pt}[64,74] & 98.1\hspace{1pt}[96,100] & 96.8\hspace{1pt}[95,99] & 83.8\hspace{1pt}[78,89] & 85.3\hspace{1pt}[80,91] & 67.0\hspace{1pt}[60,75] \\
\bottomrule
\end{tabularx}
\caption{Test risk and utility at $\alpha=0.15$ (\%). Shaded Risk rows report FOR; white Utility rows report TNR. Entries are estimates [95\% CI], with bounds rounded outward to whole percentage points. Red risk exceeds the budget. \nd{ND}/\nf{NF}: no returned policy; \ud{UD}: undefined FOR (zero TNR). --: not evaluated. *: supervised adaptation.}\label{tab:paired-ci-0.15}
\end{table*}

\begin{table*}[h]
\centering\footnotesize
\setlength{\tabcolsep}{2pt}
\renewcommand{\arraystretch}{0.94}
\begin{tabularx}{\textwidth}{@{}lc*{6}{>{\centering\arraybackslash}X}@{}}
\toprule
 &  & \multicolumn{3}{c}{\textbf{Moderation: WildGuardMix}} & \multicolumn{3}{c}{\textbf{Moderation: Aegis}} \\
\cmidrule(lr){3-5}\cmidrule(lr){6-8}
Model & Metric & Naive & Empirical & LTT & Naive & Empirical & LTT \\
\midrule
\rowcolor{gray!12}Qwen3Guard & Risk & 11.3\hspace{1pt}[8,15] & 16.7\hspace{1pt}[13,20] & 14.0\hspace{1pt}[11,17] & \hr{40.5}\hspace{1pt}[34,47] & \hr{28.6}\hspace{1pt}[22,35] & 19.4\hspace{1pt}[13,26] \\
 & Utility & 94.9\hspace{1pt}[92,97] & 97.0\hspace{1pt}[95,99] & 96.4\hspace{1pt}[94,99] & 93.7\hspace{1pt}[90,98] & 86.4\hspace{1pt}[81,91] & 73.8\hspace{1pt}[67,80] \\
\rowcolor{gray!12}PCA-Ridge$^{*}$ & Risk & 5.6\hspace{1pt}[3,9] & 17.9\hspace{1pt}[14,21] & 15.7\hspace{1pt}[12,19] & 5.6\hspace{1pt}[1,11] & \hr{23.8}\hspace{1pt}[17,30] & 17.4\hspace{1pt}[11,24] \\
 & Utility & 74.6\hspace{1pt}[70,79] & 96.2\hspace{1pt}[94,98] & 93.9\hspace{1pt}[91,97] & 52.9\hspace{1pt}[45,61] & 80.6\hspace{1pt}[74,87] & 74.3\hspace{1pt}[68,81] \\
\rowcolor{gray!12}Jev & Risk & 3.8\hspace{1pt}[2,6] & 18.4\hspace{1pt}[15,22] & 17.8\hspace{1pt}[14,22] & \hr{25.6}\hspace{1pt}[19,32] & \hr{25.8}\hspace{1pt}[20,32] & \hr{23.5}\hspace{1pt}[18,30] \\
 & Utility & 74.2\hspace{1pt}[70,79] & 98.3\hspace{1pt}[97,100] & 98.3\hspace{1pt}[97,100] & 86.9\hspace{1pt}[82,92] & 87.4\hspace{1pt}[82,93] & 85.3\hspace{1pt}[80,91] \\
\bottomrule
\end{tabularx}
\caption{Test risk and utility at $\alpha=0.20$ (\%). Shaded Risk rows report FOR; white Utility rows report TNR. Entries are estimates [95\% CI], with bounds rounded outward to whole percentage points. Red risk exceeds the budget. \nd{ND}/\nf{NF}: no returned policy; \ud{UD}: undefined FOR (zero TNR). --: not evaluated. *: supervised adaptation.}\label{tab:paired-ci-0.20}
\end{table*}

\begin{table*}[h]
\centering\footnotesize
\setlength{\tabcolsep}{2pt}
\renewcommand{\arraystretch}{0.94}
\begin{tabularx}{\textwidth}{@{}lc*{6}{>{\centering\arraybackslash}X}@{}}
\toprule
 &  & \multicolumn{3}{c}{\textbf{Moderation: WildGuardMix}} & \multicolumn{3}{c}{\textbf{Moderation: Aegis}} \\
\cmidrule(lr){3-5}\cmidrule(lr){6-8}
Model & Metric & Naive & Empirical & LTT & Naive & Empirical & LTT \\
\midrule
\rowcolor{gray!12}Qwen3Guard & Risk & 12.4\hspace{1pt}[9,16] & \hr{40.9}\hspace{1pt}[37,45] & 39.0\hspace{1pt}[35,43] & \hr{44.7}\hspace{1pt}[38,51] & \hr{44.2}\hspace{1pt}[38,50] & \hr{40.3}\hspace{1pt}[34,46] \\
 & Utility & 96.0\hspace{1pt}[94,98] & 99.8\hspace{1pt}[99,100] & 99.8\hspace{1pt}[99,100] & 96.3\hspace{1pt}[93,99] & 95.8\hspace{1pt}[92,99] & 93.2\hspace{1pt}[89,97] \\
\rowcolor{gray!12}PCA-Ridge$^{*}$ & Risk & 10.8\hspace{1pt}[7,14] & \hr{40.1}\hspace{1pt}[36,44] & 37.7\hspace{1pt}[34,41] & 13.7\hspace{1pt}[8,20] & \hr{42.0}\hspace{1pt}[36,48] & 39.5\hspace{1pt}[33,45] \\
 & Utility & 87.3\hspace{1pt}[84,91] & 99.6\hspace{1pt}[98,100] & 99.4\hspace{1pt}[98,100] & 69.1\hspace{1pt}[62,76] & 94.8\hspace{1pt}[91,98] & 94.8\hspace{1pt}[91,98] \\
\rowcolor{gray!12}Jev & Risk & 7.1\hspace{1pt}[4,10] & \hr{40.4}\hspace{1pt}[36,45] & 36.4\hspace{1pt}[32,41] & 35.0\hspace{1pt}[29,41] & \hr{44.0}\hspace{1pt}[38,50] & \hr{40.4}\hspace{1pt}[34,46] \\
 & Utility & 87.9\hspace{1pt}[85,91] & 100.0\hspace{1pt}[100,100] & 100.0\hspace{1pt}[100,100] & 95.3\hspace{1pt}[92,99] & 98.4\hspace{1pt}[96,100] & 97.9\hspace{1pt}[95,100] \\
\bottomrule
\end{tabularx}
\caption{Test risk and utility at $\alpha=0.40$ (\%). Shaded Risk rows report FOR; white Utility rows report TNR. Entries are estimates [95\% CI], with bounds rounded outward to whole percentage points. Red risk exceeds the budget. \nd{ND}/\nf{NF}: no returned policy; \ud{UD}: undefined FOR (zero TNR). --: not evaluated. *: supervised adaptation.}\label{tab:paired-ci-0.40}
\end{table*}

\begin{table*}[h]
\centering\footnotesize
\setlength{\tabcolsep}{2pt}
\renewcommand{\arraystretch}{0.94}
\begin{tabularx}{\textwidth}{@{}lc*{6}{>{\centering\arraybackslash}X}@{}}
\toprule
 &  & \multicolumn{3}{c}{\textbf{Moderation: WildGuardMix}} & \multicolumn{3}{c}{\textbf{Moderation: Aegis}} \\
\cmidrule(lr){3-5}\cmidrule(lr){6-8}
Model & Metric & Naive & Empirical & LTT & Naive & Empirical & LTT \\
\midrule
\rowcolor{gray!12}Qwen3Guard & Risk & 13.3\hspace{1pt}[10,17] & 44.4\hspace{1pt}[41,48] & 44.4\hspace{1pt}[41,48] & 46.6\hspace{1pt}[41,52] & \hr{53.9}\hspace{1pt}[48,59] & 49.5\hspace{1pt}[44,55] \\
 & Utility & 96.4\hspace{1pt}[94,99] & 100.0\hspace{1pt}[100,100] & 100.0\hspace{1pt}[100,100] & 97.9\hspace{1pt}[95,100] & 99.5\hspace{1pt}[98,100] & 99.0\hspace{1pt}[97,100] \\
\rowcolor{gray!12}PCA-Ridge$^{*}$ & Risk & 12.1\hspace{1pt}[9,15] & 44.4\hspace{1pt}[41,48] & 44.4\hspace{1pt}[41,48] & 19.7\hspace{1pt}[13,26] & \hr{52.8}\hspace{1pt}[47,58] & 48.8\hspace{1pt}[43,54] \\
 & Utility & 90.5\hspace{1pt}[87,94] & 100.0\hspace{1pt}[100,100] & 100.0\hspace{1pt}[100,100] & 74.9\hspace{1pt}[68,81] & 99.0\hspace{1pt}[97,100] & 98.4\hspace{1pt}[96,100] \\
\rowcolor{gray!12}Jev & Risk & 8.8\hspace{1pt}[6,12] & 44.4\hspace{1pt}[41,48] & 44.4\hspace{1pt}[41,48] & 38.3\hspace{1pt}[32,44] & \hr{53.8}\hspace{1pt}[48,59] & \hr{51.8}\hspace{1pt}[46,57] \\
 & Utility & 91.8\hspace{1pt}[89,95] & 100.0\hspace{1pt}[100,100] & 100.0\hspace{1pt}[100,100] & 97.9\hspace{1pt}[95,100] & 99.0\hspace{1pt}[97,100] & 98.4\hspace{1pt}[96,100] \\
\bottomrule
\end{tabularx}
\caption{Test risk and utility at $\alpha=0.50$ (\%). Shaded Risk rows report FOR; white Utility rows report TNR. Entries are estimates [95\% CI], with bounds rounded outward to whole percentage points. Red risk exceeds the budget. \nd{ND}/\nf{NF}: no returned policy; \ud{UD}: undefined FOR (zero TNR). --: not evaluated. *: supervised adaptation.}\label{tab:paired-ci-0.50}
\end{table*}

\clearpage
\subsection{Risk Versus Budget: All Models and Datasets}\label{app:risk-budget-panels}
Each figure shows all evaluated models; status grids have $\alpha$ columns and Naive/Empirical/LTT rows.
\begin{figure*}[ht]
\centering
\includegraphics[width=\textwidth]{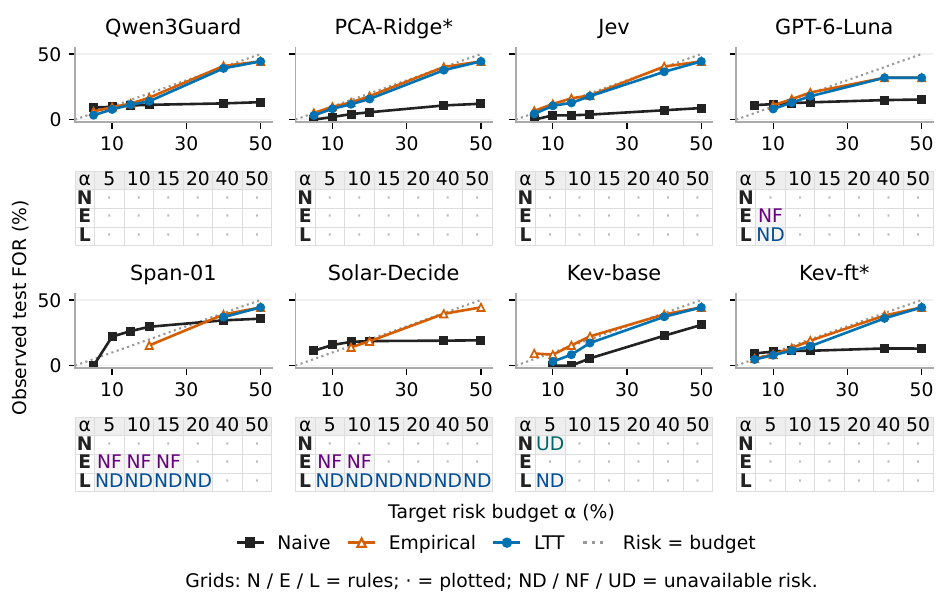}
\caption{WildGuardMix: test FOR versus budget risk.}
\label{fig:budget-risk-full}
\end{figure*}
\begin{figure*}[b!]
\centering
\includegraphics[width=\textwidth]{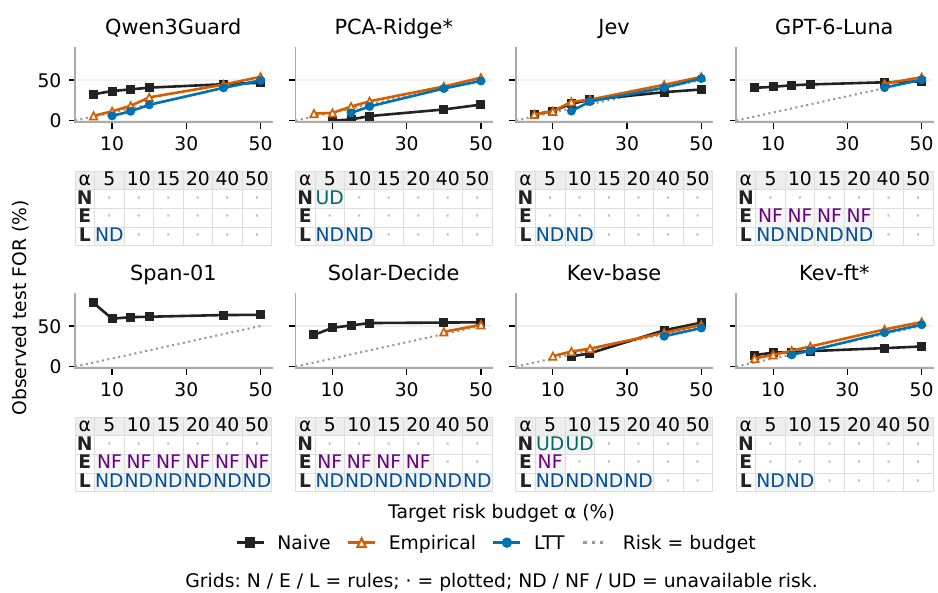}
\caption{Aegis: test FOR versus risk budget.}
\end{figure*}
\begin{figure*}[ht]
\centering
\includegraphics[width=\textwidth]{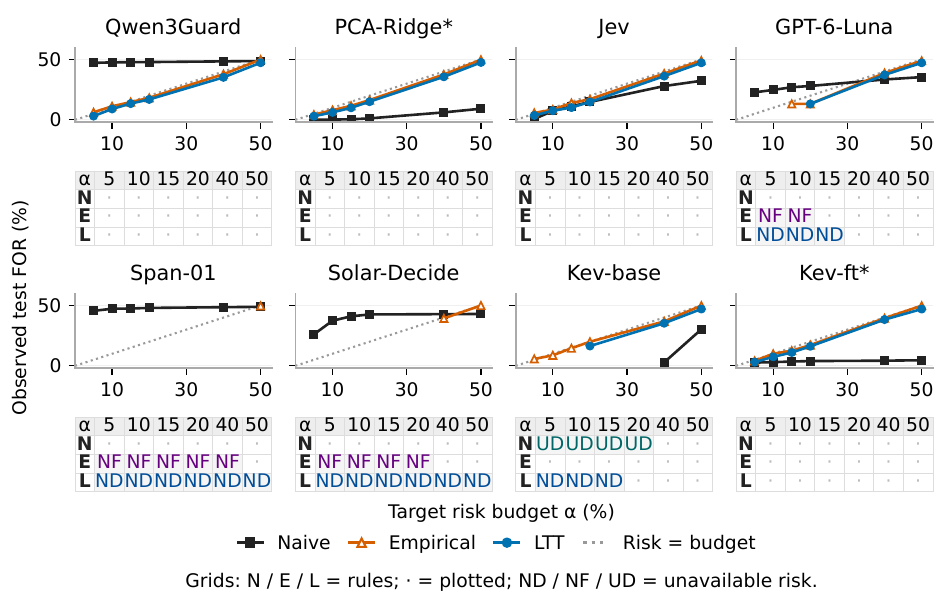}
\caption{S-Labs: test FOR versus risk budget.}
\end{figure*}
\begin{figure*}[ht]
\centering
\includegraphics[width=\textwidth]{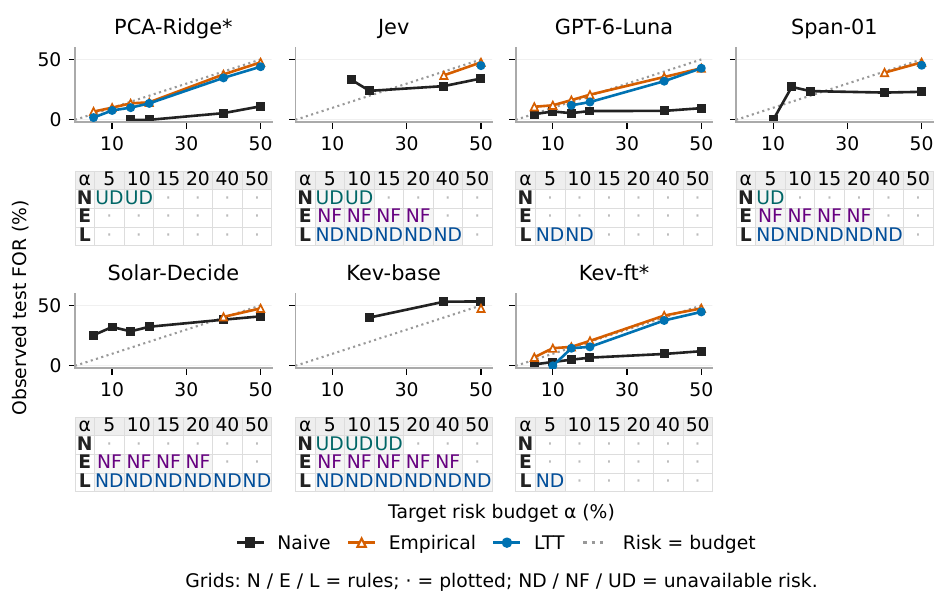}
\caption{ATBench: test FOR versus risk budget.}
\end{figure*}
\begin{figure*}[ht]
\centering
\includegraphics[width=\textwidth]{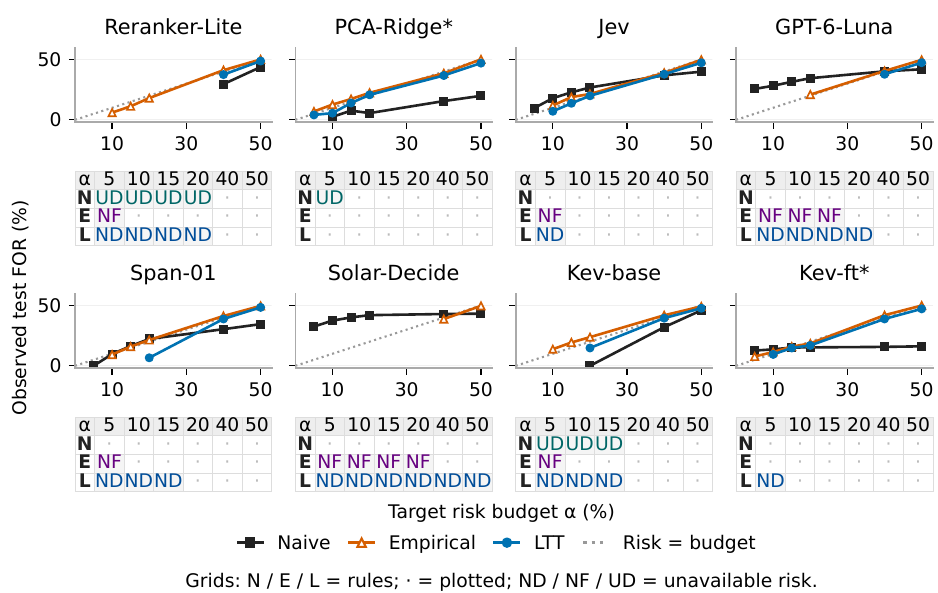}
\caption{SciDocs: test FOR versus risk budget.}
\end{figure*}
\begin{figure*}[t]
\centering
\includegraphics[width=\textwidth]{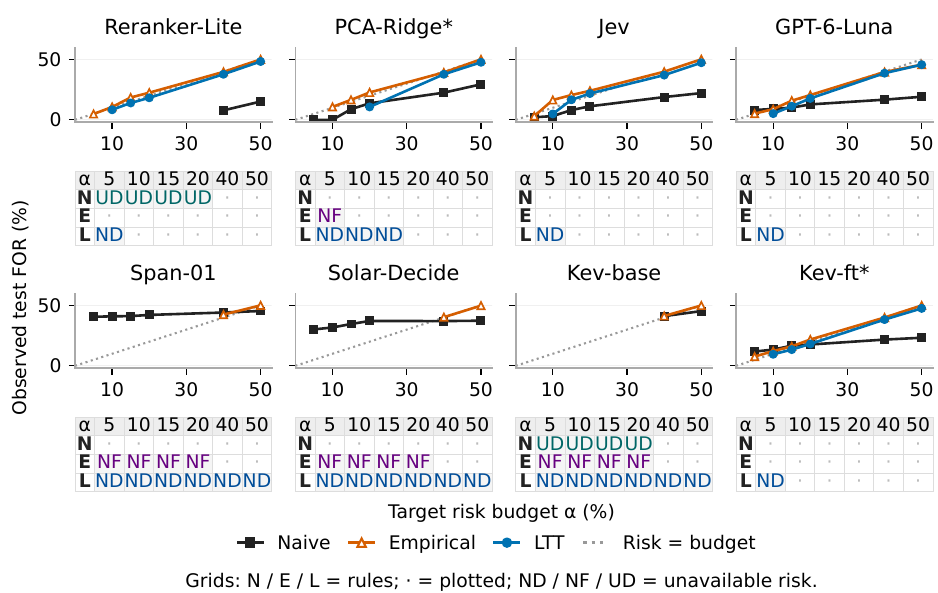}
\caption{StackOverflow: test FOR versus risk budget.}
\end{figure*}
\clearpage
\subsection{LTT Risk--Utility Across Datasets}\label{app:operating-points}
Figure~\ref{fig:operating-grid} shows LTT-selected operating points for all six datasets. Utility is benign acceptance for moderation and safety, and irrelevant removal for reranking. Missing certificates are never plotted as zero-risk points.
\begin{figure*}[ht]
\centering
\includegraphics[width=\textwidth]{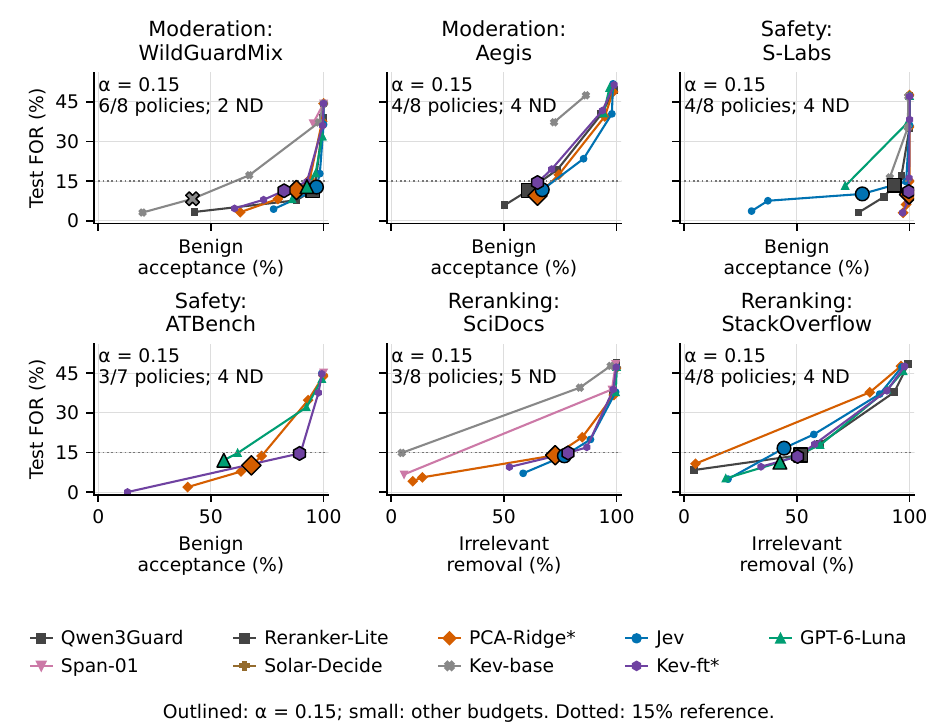}
\caption{LTT-selected operating points for all datasets. Large outlined markers identify $\alpha=0.15$; small markers show the other budgets. Counts indicate defined policies and ND outcomes at $0.15$. The dotted line is the 15\% reference. *: supervised adaptation.}
\label{fig:operating-grid}
\end{figure*}

\end{document}